\def\eqref#1{equation~\ref{#1}}
\def\1{\bm{1}}
\DeclareMathAlphabet{\mathsfit}{\encodingdefault}{\sfdefault}{m}{sl}
\SetMathAlphabet{\mathsfit}{bold}{\encodingdefault}{\sfdefault}{bx}{n}
\theoremstyle{plain}
\newtheorem{theorem}{Theorem}
\newtheorem{proposition}[theorem]{Proposition}
\theoremstyle{remark}
\title{Stratified Hazard Sampling: Minimal-Variance Event Scheduling for CTMC/DTMC Discrete Diffusion and Flow Models}
\author{%
  Seunghwan Jang\\
  Department of EE\\
  KAIST\\
  Daejeon, South Korea \\
  \texttt{jsh991124@kaist.ac.kr}
  \And
  SooJean Han\\
  Department of EE\\
  KAIST\\
  Daejeon, South Korea \\
  \texttt{soojean@kaist.ac.kr}
}
\begin{document}

\maketitle

\begin{abstract}
Uniform-noise discrete diffusion and flow models (e.g., D3PM, SEDD, UDLM, DFM) generate sequences non-autoregressively by iteratively refining randomly initialized vocabulary tokens through multiple context-dependent replacements.
These models are typically formulated as time-inhomogeneous CTMC/DTMC processes and sampled using
independent Bernoulli change decisions at each discretization step.
This induces Poisson-binomial variance in per-position jump counts that grows with the number of required edits, leading to the characteristic under-editing (residual noise) and over-editing (cascading substitutions) failure modes that degrade sample quality, especially under tight discretization budgets.
In contrast, absorbing-state (mask-start) models avoid this instability by allowing each position to jump at most once.

We propose \emph{Stratified Hazard Sampling} (SHS), a training-free, drop-in, and hyperparameter-free inference principle for any sampler that admits a stay-vs.-replace decomposition.
SHS models per-token edits as events driven by cumulative hazard (CTMC) or cumulative jump mass (DTMC) and places events by stratifying this cumulative quantity:
with a single random phase per position, a token is updated whenever its accumulated hazard crosses unit-spaced thresholds.
This preserves the expected number of jumps 
while achieving the minimum possible conditional variance among unbiased integer estimators (bounded by \(1/4\) for any fixed cumulative mass), without altering per-jump destination sampling and thus retaining multimodality.
Experiments on uniform-noise discrete diffusion language models show that SHS consistently improves sample quality.
We further show that SHS improves robustness under token-level blacklist filtering,
with benefits increasing as lexical constraints grow more severe. Code is available at the \href{https://github.com/Jang-seunghwan/Stratified-Hazard-Sampling}{git hub}.
\end{abstract}

\section{Introduction}
\label{introduction}

Non-autoregressive generation for high-dimensional discrete data (e.g., text, code) is a promising approach that can significantly reduce inference latency by updating all tokens in parallel.
A growing family of methods recasts discrete generation as a time-inhomogeneous Markov process (CTMC or DTMC) that transports an easy prior \(p_0\) to the data distribution \(p_1\), including discrete diffusion models (D3PM, SEDD, UDLM) and discrete flow matching (DFM).
Among initialization strategies, absorbing-state (mask-start) methods \citep{austin2021d3pm, campbell2022ctdd}---where each position unmasks exactly once---yield simple, low-variance trajectories but reduce generation to any-order parallel unmasking without iterative refinement.
In contrast, uniform-noise initialization \citep{austin2021d3pm, lou2024sedd, schiff2025simple} allows each position to undergo multiple context-dependent jumps, providing genuine self-correction capability at the cost of increased trajectory complexity.

However, this multi-jump flexibility comes at a practical cost in the standard step-based sampler.
At each discretization step, every position independently decides whether to jump via a Bernoulli (or categorical) draw.
The total number of edits per position is therefore a Poisson-binomial random variable whose variance \(\sum_k p_{ik}(1 - p_{ik})\) can grow linearly with the cumulative jump mass---precisely the regime where uniform-noise models operate, since meaningful generation typically requires multiple self-correction edits per position.
This sampler-induced variance produces two characteristic failure modes at the tails of the jump-count distribution.
\textit{Under-editing} (too few jumps) occurs when insufficient substitutions leave residual noise or local inconsistencies, preventing full integration of contextual information before the process terminates.
Conversely, \textit{over-editing} (too many jumps) occurs when excessive substitutions cascade, disrupting even coherent segments and producing repetitions or distortions.
These failure modes are not inherent to the uniform-noise formulation itself but are artifacts of the independent per-step sampling mechanism, and they worsen as the discretization budget (NFE) decreases.

The goal of this paper is to preserve the expressive multi-jump dynamics of uniform-noise-start discrete generative models while structurally eliminating this unnecessary sampler variance.
To this end, we propose \textit{Stratified Hazard Sampling (SHS)}, which requires no retraining, no additional hyperparameters, and no architectural modification---only a change to the inference-time sampling rule.
SHS leverages the cumulative hazard of a non-homogeneous Poisson process (NHPP) and uses the cumulative hazard \(S(t) = \int_0^t \lambda(s) \, ds\) to stratify event placement in this space.
SHS triggers jumps when the cumulative hazard \(S(t)\) crosses integer boundaries offset by a random \(\theta \sim \mathrm{Uniform}(0,1)\).
Unlike prior works \citep{austin2021d3pm, campbell2022ctdd, lou2024sedd, schiff2025simple, gat2024dfm}, which used probabilistic coin flips at fixed time steps to trigger the jumps, SHS preserves the expected jump count while bounding the jump count variance to a theoretical minimum (at most \(1/4\); see Appendix~\ref{appen:variance_proof}).
Consequently, SHS mitigates the long-tail waiting times and ``sampling luck'' that are especially pronounced in uniform-noise starts, simultaneously suppressing under- and over-editing, and enabling more stable, reproducible sampling even with fewer neural function evaluations (NFEs).

Token-level blacklists provide a simple and transparent form of lexical control:
at each replacement, forbidden tokens are filtered out and the remaining distribution is renormalized.
While widely used, this filtering makes sampling trajectories more brittle:
a single position that remains unedited can dominate lexical metrics, and this brittleness worsens
as the blacklist becomes more restrictive. In this work, we keep the filtering rule fixed and
isolate the effect of event scheduling, showing that SHS yields substantially improved robustness.

\section{Preliminaries}
\label{prelim}

\subsection{CTMC/DTMC-based Discrete Generative Models}
Let $\mathcal{V}$ be a vocabulary, $N$ the sequence length, and denote a sequence as $x=(x_1,\dots,x_N)\in\mathcal{V}^N$.
We consider a continuous-time generative process $(X_t)_{t\in[0,1]}$ with terminal
distribution $p_1(x)$ and an initial distribution $p_0$ (often easy to sample, e.g., uniform or masked tokens).

Many non-autoregressive discrete generative models can be expressed as a time-inhomogeneous Markov process on $\mathcal{V}^N$.
We focus on the common \emph{single-site replacement} structure \citep{austin2021d3pm,campbell2022ctdd,gat2024dfm} where, at each (continuous or discrete) time, each position either stays unchanged or is replaced by a new token.

\paragraph{CTMC formulation.}
A time-inhomogeneous continuous-time Markov chain (CTMC) is specified by a generator $Q_t$.
For each position $i$ we define an \emph{escape rate} $\lambda_i(t,x)\ge 0$
and a \emph{conditional destination distribution} $q_{t,i}(\cdot\mid x)\in\Delta(\mathcal{V})$
satisfying $q_{t,i}(x_i\mid x)=0$ (i.e., $q_{t,i}(\cdot\mid x)\in\Delta(\mathcal{V}\setminus\{x_i\})$),
where $\Delta(\mathcal{V})$ denotes the probability simplex over $\mathcal{V}$.
Let $x^{(i\leftarrow v)}$ denote the sequence obtained from $x$ by replacing $x_i$ with $v$.
We parameterize off-diagonal rates as
\begin{equation}
Q_t\!\left(x^{(i\leftarrow v)} \mid x\right)=\lambda_i(t,x)\,q_{t,i}(v\mid x),
\qquad v\neq x_i,
\label{eq:local_rates}
\end{equation}
and set $Q_t(x\mid x)=-\sum_{i}\lambda_i(t,x)$.
This local CTMC view subsumes continuous-time discrete diffusion models (e.g., CTDD) and flow-based formulations (e.g., DFM) whenever their one-step update admits a mixture-of-(stay vs.\ replace) form.

\paragraph{DTMC formulation (discrete diffusion).}
A large class of discrete diffusion models is defined on a discrete time grid and specifies a time-inhomogeneous
discrete-time Markov chain (DTMC) with a learned reverse kernel $P_{t_{k+1}\mid t_k}(\cdot\mid x)$.
For single-site updates, we write the per-position kernel as a categorical distribution $P_{k,i}(\cdot\mid x)\in\Delta(\mathcal{V})$.
Any categorical kernel admits a (stay vs.\ replace) decomposition:
\begin{equation}
\begin{split}
    p_{ik}(x) &= 1-P_{k,i}(x_i\mid x), \\
    q_{k,i}(v\mid x) &= \frac{P_{k,i}(v\mid x)}{p_{ik}(x)} \quad (v\neq x_i),
\end{split}
\label{eq:dtmc_decomp}
\end{equation}
where $p_{ik}(x)\in[0,1]$ is the probability of changing token $i$ at step $k$ and $q_{k,i}$ is the destination distribution conditional on changing.

\paragraph{Unified jump-mass view.}
Both CTMC $\tau$-leaping and DTMC categorical updates can be written in the same schematic form:
at step $k$, draw a change indicator $B_{ik}\sim\mathrm{Bernoulli}(p_{ik})$ and,
if $B_{ik}=1$, sample a new token from $q_{k,i}(\cdot\mid x)$.
For CTMC Euler/$\tau$-leaping, $p_{ik}=h\,\lambda_i(t_k,x)$ (assuming $p_{ik}\le 1$); for DTMC, $p_{ik}$ is given by \eqref{eq:dtmc_decomp}.
Define the cumulative hazard/jump mass
\begin{equation}
S_{i,k}\;=\;\sum_{j=0}^{k-1} p_{ij},
\qquad
S_{i,n}\approx \int_0^1 \lambda_i(t,X_t)\,dt
\label{eq:cum_jump_mass}
\end{equation}
Under the standard step-based sampler, the total number of jumps at position $i$ is
$J_i=\sum_{k=0}^{n-1}B_{ik}$, a Poisson-binomial random variable with
$\mathbb{E}[J_i]=\sum_k p_{ik}$ and $\mathrm{Var}(J_i)=\sum_k p_{ik}(1-p_{ik})$.
This Poisson-binomial variance, amplified under uniform-noise initialization where multiple self-correction edits are required, is the sampler-induced variance targeted by SHS.
For more general background on CTMCs, DTMCs, and their relationship, the reader is referred to standard probability references such as \citet{ross2023ipm} and \citet{grimmett2020prp}.

\paragraph{Absorbing-state (mask-start) vs.\ uniform-noise.}
The sampler-induced jump-count variance above primarily matters in \emph{multi-jump} (self-correction) regimes.
In the standard absorbing-state (mask-start) setting, the reverse dynamics is unmask-only:
each position transitions monotonically from \texttt{[MASK]} to a vocabulary token at most once,
so the per-position jump count satisfies $J_i \in \{0,1\}$.
See Appendix~\ref{app:maskstart_single_edit} for a formal statement.

\subsection{Lexical constraints via blacklist filtering}
\label{sec:prelim-blacklist}
We use token-level blacklists as a simple lexical constraint and as a robustness stress test for
step-based inference under \emph{uniform-noise} initialization.
Fix a blacklist ratio $\rho\in[0,1)$ and sample a forbidden set
$\mathcal{B}_\rho\subset\mathcal{V}$ uniformly at random with
$|\mathcal{B}_\rho|=\lfloor \rho |\mathcal{V}|\rfloor$.
Let the allowed vocabulary be $\mathcal{V}_\rho := \mathcal{V}\setminus \mathcal{B}_\rho$ and the allowed set of sequences
$\mathcal{A}_\rho := \{x\in\mathcal{V}^N : \forall i,\ x_i\in \mathcal{V}_\rho\}$.

\paragraph{Safe-vocabulary initialization.}
To avoid conflating lexical filtering with explicit ``token removal'' dynamics, we initialize
\begin{equation}
X_0 \sim \mathrm{Unif}(\mathcal{V}_\rho)^N,
\label{eq:safe_init}
\end{equation}
i.e., each position starts from a uniformly random token in the \emph{allowed} vocabulary.

\paragraph{Mass-preserving destination filtering (renormalization).}
At inference time, we keep the model-predicted \emph{change mass} (DTMC) $p_{ik}(x)$ in \eqref{eq:dtmc_decomp}
(or the \emph{escape rate} (CTMC) $\lambda_i(t,x)$ in \eqref{eq:local_rates}) unchanged, and enforce the blacklist
only by filtering and renormalizing the \emph{destination} distribution:
\begin{equation}
q^{(\rho)}_{t,i}(v\mid x)
\;=\;
\frac{q_{t,i}(v\mid x)\,\mathbf{1}[v\in\mathcal{V}_\rho]}
{\sum_{u\in\mathcal{V}_\rho} q_{t,i}(u\mid x)}.
\label{eq:blacklist_renorm}
\end{equation}
This ensures that lexical filtering does not \emph{trivially} reduce the expected number of effective edits by shrinking the
total jump mass; it only restricts which tokens can be proposed when a jump occurs.
In all blacklist experiments (Section~\ref{sec:exp-blacklist}), we apply the same filtering rule \eqref{eq:blacklist_renorm}
to both the Standard sampler and SHS, and isolate the effect of \emph{event scheduling}.

\paragraph{Conditioning lens (background).}
One may view $\mathcal{A}_\rho$ as a terminal constraint and ask for the conditional distribution $p_1(\cdot\mid \mathcal{A}_\rho)$.
In general, the \emph{exact} conditioned CTMC/DTMC dynamics corresponds to a Doob $h$-transform and can reweight not only
destinations but also effective jump intensities, so masking-only destination filtering is generally biased as a conditional sampler.
We use this only as a diagnostic lens and do not attempt to estimate $h_t$; see Appendix~\ref{app:blacklist} for details.

\section{Related Works}
\label{related}

\subsection{Discrete Diffusion and CTMC Generative Models}
Diffusion-style generative modeling has been extended to discrete spaces in several ways.
Multinomial diffusion and related categorical constructions were studied by
\citet{hoogeboom2021argmax}.
D3PM \citep{austin2021d3pm} generalizes discrete diffusion by allowing structured transition matrices,
including absorbing-state and nearest-neighbor corruptions, and demonstrates strong results on text and images.
A fully continuous-time perspective that explicitly formulates both forward corruption and reverse generation
as CTMCs was developed by \citet{campbell2022ctdd},
connecting discrete diffusion sampling to Markov jump process simulation techniques.
More recently, Score Entropy Discrete Diffusion (SEDD) \citep{lou2024sedd} learns the reverse jump rates by
estimating probability ratios (concrete scores) with a scalable score-entropy objective.
In parallel, \citet{schiff2025simple} revisited uniform-noise diffusion language models (UDLM), deriving discrete
classifier-free/classifier-based guidance and improved variational training bounds, which are particularly
relevant for controllable generation settings where multiple edits per position are common. Relatedly,~\cite{rutte2025generalized} propose Generalized Interpolating Discrete Diffusion (GIDD), which generalizes masked diffusion by
interpolating between data and a mixing distribution and explores hybrid masking/uniform-noise noising schemes. A systematic study of scaling behavior across noise types by~\citet{vonrutte2025scalingbehaviordiscretediffusion} finds that uniform diffusion requires more parameters but less data for compute-efficient training, and scales a uniform diffusion model to 3B and 10B parameters.

Beyond diffusion, flow-based viewpoints for discrete data have recently emerged.
Discrete Flow Matching (DFM) \citep{gat2024dfm} proposes a discrete analogue of flow matching for high-dimensional
categorical data and reports strong performance in language and code generation.
Other contemporaneous work explores geometric and manifold-aware discrete flow formulations.

\subsection{Iterative Masked-Token Refinement for Non-Autoregressive Generation}
A parallel line of work studies iterative refinement with masked language models.
Mask-Predict \citep{ghazvininejad2019maskpredict} performs parallel decoding by repeatedly masking and regenerating
low-confidence tokens.
MaskGIT \citep{chang2022maskgit} introduces a confidence-based masking schedule for fast non-autoregressive synthesis,
popularizing keep-the-most-confident refinement.
For language modeling, several diffusion-inspired or simplex/continuous relaxations have been proposed,
including DiffusionBERT \citep{he2023diffusionbert}, SSD-LM \citep{han2022ssdlm},
and continuous-time/continuous-space categorical diffusion formulations \citep{dieleman2022cdcd}.

\subsection{Simulation of Inhomogeneous Poisson Processes and Markov Jump Processes}
Simulating CTMCs and non-homogeneous Poisson processes (NHPPs) is a classical topic.
Thinning is a standard exact technique for NHPP simulation \citep{lewis1979simulation}.
Uniformization is widely used in Markov jump process inference and sampling,
enabling algorithms that avoid expensive matrix exponentials while preserving correctness \citep{rao2013mjp}.
These classical tools inform modern samplers for high-dimensional discrete diffusion and CTMC generative models,
and motivate variance-reduction schemes tailored to discrete generative inference.

\subsection{Constrained generation and token-level lexical filtering}
\label{sec:related-lexical}

Lexical constraints have a long history as decoding-time control mechanisms in sequence generation.
For autoregressive models, lexically constrained decoding has been studied through exact or approximate search
procedures such as Grid Beam Search \citep{hokamp-liu-2017-lexically} and Dynamic Beam Allocation \citep{post-vilar-2018-fast},
with later work improving practicality and throughput \citep{hu-etal-2019-improved}.
In safety-oriented settings, a widely used baseline is hard filtering (banning) of disallowed tokens or phrases.
However, empirical evidence suggests that simple banned-word strategies can be insufficient as a complete safety solution,
highlighting the distributional mismatch introduced by naive filtering \citep{gehman-etal-2020-realtoxicityprompts}.

Our work does not propose a new constraint satisfaction algorithm.
Instead, we study how common destination-side filtering heuristics interact with \emph{step-based} CTMC/DTMC samplers.
In particular, when lexical filtering is applied but the underlying stay-vs.-replace masses (or escape rates) remain unchanged,
sampling quality can become sensitive to under-/over-edit trajectory tails.
SHS addresses this orthogonal instability by reducing sampler-induced variance in edit counts and timing, while remaining a
drop-in replacement that can be composed with destination filtering (Section~\ref{sec:prelim-blacklist} and
Section~\ref{sec:exp-blacklist}).

\section{Stratified Hazard Sampling}
\label{method}

\begin{wrapfigure}{h}{0.45\textwidth}
  \vspace{-50pt}
  \centering
  \begin{subfigure}{\linewidth}
    \centering
    \includegraphics[width=\linewidth]{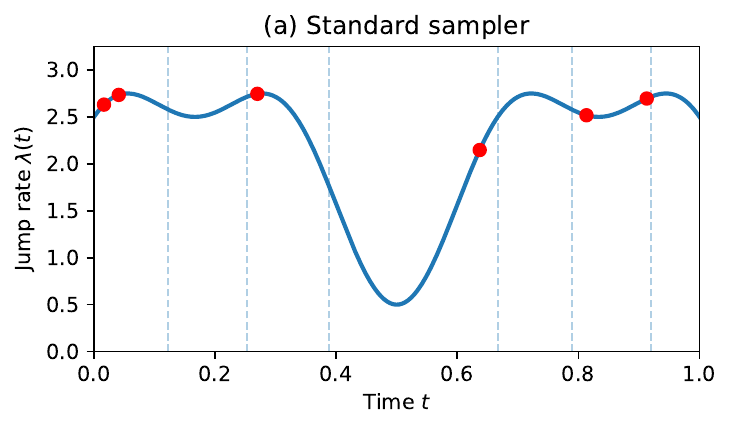}
    \caption{At fixed steps $t_k$, jump decisions are made probabilistically from the current rate $\lambda(t_k)$, producing irregular (sometimes clumpy) events and high variance.}
    \label{fig:em-vs-shs:em}
  \end{subfigure}
  \vspace{0.6em}
  \begin{subfigure}{\linewidth}
    \centering
    \includegraphics[width=\linewidth]{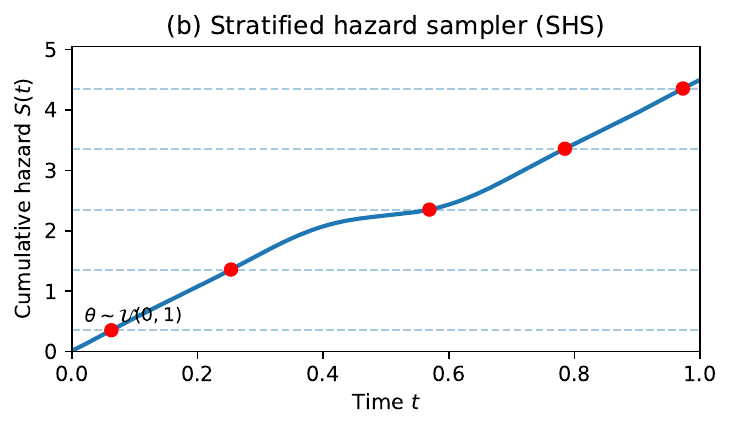}
    \caption{Cumulative hazard $S(t)=\int_0^t \lambda(s)\,ds$ and unit-spaced boundaries placed with a single offset $\theta\!\sim\!U(0,1)$. A jump occurs deterministically when $S(t)$ crosses $\theta+k$.}
    \label{fig:em-vs-shs:shs}
  \end{subfigure}
  \vspace{-20pt}
  \caption{\textbf{Standard sampler vs.\ SHS (ours).}}
  \vspace{-40pt}
  \label{fig:em-vs-shs}
\end{wrapfigure}

\vspace{-00pt}
We propose \textit{Stratified Hazard Sampling (SHS)}, adapting stratified event simulation to the \emph{step-based} samplers used in CTMC/DTMC discrete generative models.
SHS replaces the independent per-step change decisions $B_{ik}\sim\mathrm{Bernoulli}(p_{ik})$ (Section~\ref{prelim}) with \emph{single-phase} stratification in cumulative hazard/jump-mass space.
Concretely, SHS maintains the cumulative mass $S_{i,k}$ from \eqref{eq:cum_jump_mass} and triggers a jump whenever $S_{i,k}$ crosses unit-spaced thresholds $\{\theta_i + m\}_{m\in\mathbb{Z}_{\ge 0}}$ for a single random phase $\theta_i\sim\mathrm{Uniform}(0,1)$.

\paragraph{One boundary per step.}
The complete pseudocode of SHS is shown in Algorithm~\ref{alg:shs}.
Since $p_{ik}\le 1$ holds by definition for DTMC kernels and is a prerequisite for valid CTMC $\tau$-leaping, at most one unit boundary can be crossed per step, so the if statement on Line 16 is sufficient.

Note that SHS changes only the \emph{timing} of jump events by coupling the Bernoulli decisions across steps through a single phase $\theta_i$.
Importantly, SHS leaves the \emph{destination sampling} $q_{k,i}$ unchanged, so the model's categorical multi-modality is preserved.

\paragraph{Composability with lexical filtering.}
SHS modifies only the stay-vs-replace event scheduling and leaves the per-jump destination sampling unchanged.
Therefore, it can be composed with common destination-side modifications such as blacklist filtering
by simply replacing $q_{t,i}$ with the filtered destination $q^{(\rho)}_{t,i}$ in ~\eqref{eq:blacklist_renorm} while keeping the same scheduler.

\paragraph{Jump-count concentration.}
Let $S_i^{\mathrm{tot}}:=S_{i,n}=\sum_{k=0}^{n-1}p_{ik}$ denote the total cumulative hazard/jump mass accumulated at position $i$.
Write $S_i^{\mathrm{tot}} = I_i + f_i$ with $I_i=\lfloor S_i^{\mathrm{tot}}\rfloor$ and $f_i\in[0,1)$.
Since SHS counts how many thresholds $\theta_i+m$ are crossed by $S_i^{\mathrm{tot}}$, we have
\begin{equation}
J_i \;=\; I_i + \mathbf{1}[\theta_i < f_i],
\label{eq:shs_rounding}
\end{equation}
so $J_i\in\{I_i, I_i+1\}$ \emph{conditional on the realized cumulative mass} $S_i^{\mathrm{tot}}$, and the long-tail jump-count fluctuations of step-wise Bernoulli sampling are eliminated. 
Note that in the full model the masses $p_{ik}(x)$ are state-dependent, so $S_i^{\mathrm{tot}}$ itself is a random variable whose distribution may differ between SHS and standard sampling (see Appendix~\ref{sec:scope} for discussion).
As a result, SHS suppresses the characteristic under-edit (too few substitutions) and over-edit (cascading substitutions) failure modes that arise under uniform-noise initialization.

\begin{algorithm}[h]                         
  \caption{Stratified Hazard Sampling (SHS) for CTMC/DTMC discrete generative models}
  \label{alg:shs}
  \begin{algorithmic}[1]
    \STATE \textbf{Input:}
    \STATE \quad - Initial state $ X \sim p_0 $
    \STATE \quad - Time grid $ t_k = kh $ for $ k=0,\dots,n $
    \STATE \quad - A routine $ \textsc{ModelStep}(t_k, X_{t_k}, i) $ that returns:
    \STATE \quad \quad - change mass $ p_{ik} \in [0,1] $
    \STATE \quad \quad - destination $ q_{k,i}(\cdot \mid X_{t_k}) $
    \STATE \quad \quad (cf.\ \eqref{eq:local_rates}, \eqref{eq:dtmc_decomp})
    \STATE \textbf{Output:} Final sample $ X $ at $ t_n $ (i.e., $ t=1 $)
    \FOR{$i=1$ \textbf{to} $N$}
      \STATE $S_i \gets 0$;\quad $m_i \gets 0$;\quad \textbf{Draw} $\theta_i \sim \mathrm{Uniform}(0,1)$
    \ENDFOR
    \FOR{$k = 0,1,\dots,n-1$}
      \FOR{$i=1$ \textbf{to} $N$}
        \STATE $(p_{ik},\,q_{k,i}) \gets \textsc{ModelStep}(t_k, X_{t_k}, i)$
        \STATE $S_i \gets S_i + p_{ik}$
        \IF[/*crossed the next hazard boundary*/]{$S_i \ge \theta_i + m_i$}
          \STATE $X_{t_{k+1}}^i \sim q_{k,i}(\cdot\mid X_{t_k})$
          \STATE $m_i \gets m_i + 1$
        \ELSE
          \STATE $X_{t_{k+1}}^i \gets X_{t_k}^i$
        \ENDIF
      \ENDFOR
    \ENDFOR
    \STATE \textbf{return} $X_{t_n}$
  \end{algorithmic}
\end{algorithm}

\subsection{Theoretical Properties}
SHS is built on a simple primitive: \emph{randomized rounding} of a nonnegative real mass $S$ into an integer count $J\in\mathbb{Z}_{\ge 0}$ using a \emph{single} uniform random variable. 
Writing $S = I + f$ with $I := \lfloor S\rfloor$ and $f := S - I \in [0,1)$ (a decomposition we use throughout), the rounding rule is $J = I + \mathbf{1}[\theta < f]$ for $\theta \sim \mathrm{Uniform}(0,1)$.
In SHS, this mass is the total cumulative hazard/jump mass $S_i^{\mathrm{tot}}$ at each position (\eqref{eq:shs_rounding}).

\begin{proposition}[\textbf{Unbiasedness}]
\label{prop:unbiased_main}
Fix $S \ge 0$ and draw $\theta \sim \mathrm{Uniform}(0,1)$.
Write $S = I + f$ with $I := \lfloor S \rfloor$ and $f := S - I \in [0,1)$.
Define $J := I + \mathbf{1}[\theta < f]$.
Then $\mathbb{E}[J] = S$.
\end{proposition}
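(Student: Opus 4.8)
The plan is to exploit that, once $S$ is fixed, the decomposition $S = I + f$ involves only the deterministic constants $I = \lfloor S \rfloor$ and $f = S - I$; the sole source of randomness in $J$ is the indicator $\mathbf{1}[\theta < f]$. Hence the computation reduces to evaluating the expectation of a single Bernoulli-type indicator, and the whole argument is a one-line application of linearity of expectation together with the uniform CDF.

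First I would apply linearity of expectation to write $\mathbb{E}[J] = I + \mathbb{E}[\mathbf{1}[\theta < f]]$, where $I$ passes through as a constant since $S$ is fixed. Next I would use the elementary identity $\mathbb{E}[\mathbf{1}[A]] = \Pr[A]$ for any event $A$, giving $\mathbb{E}[\mathbf{1}[\theta < f]] = \Pr[\theta < f]$. The one computation that matters is the value of this probability: because $\theta \sim \mathrm{Uniform}(0,1)$ has CDF $F_\theta(u) = u$ on $[0,1]$, and because $f \in [0,1)$ lies inside this range, we get $\Pr[\theta < f] = f$. Substituting back yields $\mathbb{E}[J] = I + f = S$, which is the claim.

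The proof has essentially no obstacle; the only point requiring minor care is the guarantee $f \in [0,1)$ supplied by the floor decomposition, which ensures the uniform CDF is evaluated in its linear interior and avoids any boundary ambiguity. In particular, the distinction between $\Pr[\theta < f]$ and $\Pr[\theta \le f]$ is immaterial because $\theta$ is a continuous random variable, so $\{\theta = f\}$ has probability zero. I would state this explicitly to confirm the rounding rule is well posed for every $S \ge 0$, including the integer case $f = 0$, where $J = I = S$ holds deterministically and the identity is trivially satisfied.
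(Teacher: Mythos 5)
Your proof is correct and follows the same route as the paper's own argument: decompose $S = I + f$, apply linearity of expectation, and use $\Pr[\theta < f] = f$ from the uniform CDF to conclude $\mathbb{E}[J] = I + f = S$. Your extra remarks on the $f = 0$ case and the irrelevance of strict versus non-strict inequality are sound but only make explicit what the paper's one-line computation already handles implicitly.
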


\begin{proposition}[\textbf{Minimal variance}]
\label{prop:variance_main}
With $J$ as in Proposition~\ref{prop:unbiased_main} (i.e., $S = I+f$ with $f:=S-\lfloor S\rfloor$), we have
\begin{equation}
\mathrm{Var}(J) = f(1-f) \le \frac{1}{4}.
\label{eq:min_var_rounding}
\end{equation}
Moreover, among all integer-valued unbiased estimators of $S$ supported on $\{\lfloor S\rfloor, \lceil S\rceil\}$,
this variance is the minimum possible.
\end{proposition}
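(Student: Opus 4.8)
The plan is to separate the two claims—the exact variance formula together with its $1/4$ bound, and the minimality assertion—since the first is a direct computation and the second is a short optimization over two-point distributions.

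For the variance formula I would first observe that, with $S$ (hence $I=\lfloor S\rfloor$ and $f$) held fixed, the only randomness in $J=I+\mathbf{1}[\theta<f]$ comes from $\theta\sim\mathrm{Uniform}(0,1)$. Because $f\in[0,1)$, the event $\{\theta<f\}$ has probability exactly $f$, so $\mathbf{1}[\theta<f]\sim\mathrm{Bernoulli}(f)$ and $J$ is a constant shift of a Bernoulli variable. Hence $\mathrm{Var}(J)=\mathrm{Var}(\mathbf{1}[\theta<f])=f(1-f)$. The bound then follows by completing the square, $f(1-f)=\tfrac14-(f-\tfrac12)^2\le\tfrac14$, with equality at $f=\tfrac12$.

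For minimality over the stated support I would argue by uniqueness. When $f=0$ we have $\lceil S\rceil=\lfloor S\rfloor=I$, so the only estimator supported there is the constant $I$, trivially unbiased with variance $0=f(1-f)$. When $f\in(0,1)$, any unbiased estimator $\hat J$ supported on $\{I,I+1\}$ is a two-point distribution determined by the single parameter $p:=\Pr(\hat J=I+1)$, and the unbiasedness constraint $\mathbb{E}[\hat J]=I+p=S=I+f$ forces $p=f$. Thus the unbiased estimator on this support is \emph{unique}, its variance is exactly $f(1-f)$, and minimality over this family is immediate.

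To strengthen this to the minimum over \emph{all} integer-valued unbiased estimators (as the abstract claims), I would introduce the auxiliary function $g(j):=(j-I)\bigl(j-(I+1)\bigr)$. On the integers it satisfies $g(j)\ge 0$ with equality \emph{iff} $j\in\{I,I+1\}$, being a product of consecutive-integer factors. For any integer-valued $\hat J$ with $\mathbb{E}[\hat J]=S$, expanding $0\le\mathbb{E}[g(\hat J)]=\mathbb{E}[\hat J^2]-(2I+1)S+I(I+1)$ and substituting $\mathbb{E}[\hat J^2]=\mathrm{Var}(\hat J)+S^2$ with $S=I+f$ yields, after the cancellations, $\mathrm{Var}(\hat J)\ge f(1-f)$; equality requires $\mathbb{E}[g(\hat J)]=0$, i.e.\ $\hat J\in\{I,I+1\}$ almost surely, recovering the SHS estimator. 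The only genuinely creative step is choosing $g$ to vanish exactly on the two nearest integers; once that function is in hand the bound is a one-line expectation computation, so I anticipate no real obstacle beyond bookkeeping of the edge case $f=0$.
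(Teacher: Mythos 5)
Your proof of the stated proposition is correct and, for the claim as written, essentially identical to the paper's: both identify $J = I + B$ with $B=\mathbf{1}[\theta<f]\sim\mathrm{Bernoulli}(f)$, compute $\mathrm{Var}(J)=f(1-f)\le \tfrac14$, and observe that unbiasedness forces any estimator supported on $\{\lfloor S\rfloor,\lceil S\rceil\}$ to place mass exactly $f$ on the upper point, so the variance $f(1-f)$ is not merely minimal but the \emph{unique} value attainable on that support (the paper makes exactly this uniqueness remark; you additionally dispose of the degenerate case $f=0$, where the support collapses to a single integer, which the paper leaves implicit). Where you genuinely go beyond the paper is your final paragraph: introducing $g(j)=(j-I)\bigl(j-(I+1)\bigr)$, which is nonnegative on the integers and vanishes only at $j\in\{I,I+1\}$, and expanding $0\le \mathbb{E}[g(\hat J)]=\mathbb{E}[\hat J^2]-(2I+1)S+I(I+1)$ with $\mathbb{E}[\hat J^2]=\mathrm{Var}(\hat J)+S^2$ indeed yields $\mathrm{Var}(\hat J)\ge f(1-f)$ for \emph{every} integer-valued unbiased estimator, with equality forcing support in $\{I,I+1\}$. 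This is a strictly stronger statement than the proposition proves: as stated, the proposition quantifies only over two-point-supported estimators, where minimality is vacuous since the unbiased estimator on that support is unique. The paper's abstract, however, claims ``the minimum possible conditional variance among unbiased integer estimators'' without the support restriction, and it is precisely your quadratic-expectation argument that justifies that broader claim; the only bookkeeping you should add is that the inequality is trivial when $\mathrm{Var}(\hat J)=\infty$, so one may assume $\hat J$ is square-integrable.
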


\paragraph{Proof sketch.}
Write $S = I + f$ with $I = \lfloor S \rfloor$ and $f = S - I \in [0,1)$.
Then $J = I + B$ where $B = \mathbf{1}[\theta < f] \sim \mathrm{Bernoulli}(f)$.
Hence $\mathbb{E}[J] = I + f = S$ and $\mathrm{Var}(J) = f(1-f) \le 1/4$.
See Appendix~\ref{appen:variance_proof} for the full proof (and Appendix~\ref{appen:unbiased_proof} for Proposition~\ref{prop:unbiased_main}).

\paragraph{Interpretation.}
For a fixed mass $S$, standard step-wise simulation yields a Poisson-binomial count with variance $\sum_k p_k(1-p_k)$,
which can scale linearly with $S$ in the many-edit regime.
In contrast, SHS concentrates the count to two adjacent integers ($\lfloor S\rfloor$ or $\lceil S\rceil$),
eliminating long-tail ``sampler luck'' in the number of edits while keeping the model's categorical choices intact.

\paragraph{A tail event relevant to blacklist robustness.}
Beyond variance reduction, SHS also optimally suppresses the ``no-edit'' tail event under a fixed realized cumulative mass:
among integer-valued random variables with $\mathbb{E}[J]=S$, SHS attains the minimum possible $\mathbb{P}(J=0)$.
This provides a useful lens for lexical filtering robustness, where performance can be dominated by under-edited positions.
See Appendix~\ref{app:blacklist-theory} (Proposition~\ref{prop:zero-edit}).

\subsection{On Preserving Multi-modality: Decomposing Sampler Variance}

A natural concern with our minimal-variance claim is whether reducing $\text{Var}(J_i)$ harms the multi-modality (generation diversity) that stochastic sampling aims to provide. We argue that SHS does not; instead, it selectively minimizes spurious variance from sampler instability while preserving meaningful variance from model expressiveness.

This can be formalized via the Law of Total Variance. Let $Q(X_1)$ denote the quality (e.g., likelihood or task metric) of a final sample $X_1$ at $t=1$. The total variance in quality, $\text{Var}[Q(X_1)]$, arises from two sources of randomness: the sampling trajectory $\mathcal{T}$ (controlled by SHS) and the model's categorical choices $\mathcal{U}$ at each jump (preserved by SHS).
The decomposition is:
\[
\begin{aligned}
\text{Var}[Q(X_1)]
&=
\underbrace{\mathbb{E}_{\mathcal{T}}[\text{Var}_{\mathcal{U}}(Q(X_1)\mid \mathcal{T})]}_{\text{(Term 1: Model Expressiveness)}}
+
\underbrace{\text{Var}_{\mathcal{T}}[\mathbb{E}_{\mathcal{U}}(Q(X_1)\mid \mathcal{T})]}_{\text{(Term 2: Sampler Instability)}}.
\end{aligned}
\]
Term 1 represents the ``good'' variance: the model's inherent multi-modality in selecting diverse, high-quality tokens given a stable trajectory $\mathcal{T}$. SHS preserves this, as it does not modify the categorical sampling (Algorithm~\ref{alg:shs}).

Term 2 represents the ``bad'' variance: instability from sampler ``luck'' in $\mathcal{T}$, leading to stuck (low $J_i$) or overshot (high $J_i$) paths with poor average quality $\mathbb{E}_{\mathcal{U}}[Q]$. This is not true diversity but unreliability.

SHS contributes by minimizing Term 2: bounding $\text{Var}(J_i) \leq 1/4$ conditional on $S_i^{\mathrm{tot}}$ ensures near-optimal paths ($J_i \approx I$ or $I+1$), substantially reducing 
$\text{Var}_{\mathcal{T}}[\mathbb{E}_{\mathcal{U}}(\cdot)]$. Thus, SHS enhances reliability without compromising the model's expressive power. 

Crucially, while strictly separating $\mathcal{T}$ and $\mathcal{U}$ is an approximation due to state-dependent rates, this decomposition clarifies our design intent: to remove sampler-induced noise (Term 2) while leaving the model's expressive stochasticity (Term 1) intact.

\section{Experiments}
\label{sec:experiments}

\subsection{Text Generation}
\label{sec:exp-text}

We evaluate SHS on two uniform-noise discrete diffusion language models---UDLM~\citep{schiff2025simple} and GIDD \citep{rutte2025generalized}---to test whether the variance-reduction benefit generalizes across model families.
In both cases, we keep the trained model (rate/velocity predictor) \emph{fixed} and only swap the inference procedure between the standard sampler and SHS.

\paragraph{Protocol.} 
For UDLM we use $\mathrm{NFE}\in\{4,8,16,32,64\}$; for GIDD, $\mathrm{NFE}\in\{16,32,64,128,256\}$. UDLM generates $1024$ sentences per run over $5$ random seeds for each configuration, while GIDD generates $1024$ sentences with a single seed. We report sample entropy and \textbf{generative perplexity (Gen.\ PPL)} scored by a fixed \texttt{GPT2-large} evaluator (lower is better).
We additionally evaluate on the 3B-parameter uniform diffusion model of~\citet{vonrutte2025scalingbehaviordiscretediffusion} with $\mathrm{NFE}\in\{4,8,16,32,64,128\}$, $1000$ sentences, and a single seed; Gen.\ PPL is scored by \texttt{Qwen2.5-7B}.

\paragraph{Results.}
Tables~\ref{tab:udlm_genppl}--\ref{tab:scaling3b_nfe} and Figure~\ref{fig:text_genppl} summarize the results.
Across all models, SHS consistently improves Gen.\ PPL, with the largest gains in the \emph{few-step} regime:
on UDLM, SHS reduces Gen.\ PPL by $6.3\%$ at NFE$=$4, narrowing to $1.4\%$ at NFE$=$64;
on GIDD, improvements range from $17.3\%$ (NFE$=$16) to $15.7\%$ (NFE$=$256);
on the 3B model~\citep{vonrutte2025scalingbehaviordiscretediffusion}, SHS reduces Gen.\ PPL by up to $7.4\%$, while the confidence-based adaptive sampler yields substantially higher Gen.\ PPL than either parallel sampler despite comparable position-update budgets.
These gains persist across three independently trained models (110M--3B), confirming that the benefit stems from the sampler, not from model-specific artifacts.

\begin{table}[H]
  \centering
  \small
  \resizebox{0.8\columnwidth}{!}{%
  \begin{tabular}{c cc cc}
    \toprule
    \multirow{2}{*}{NFE} &
    \multicolumn{2}{c}{Standard} &
    \multicolumn{2}{c}{SHS} \\
    \cmidrule(lr){2-3} \cmidrule(lr){4-5}
    & Entropy & Gen.\ PPL$\downarrow$ & Entropy & Gen.\ PPL$\downarrow$ \\
    \midrule
     4  & $6.184\pm0.047$ & $551.5\pm26.9$ & $\bm{6.163\pm0.037}$ & $\bm{516.8\pm28.3}$ \\
     8  & $6.282\pm0.011$ & $340.1\pm18.5$ & $\bm{6.259\pm0.033}$ & $\bm{322.3\pm11.6}$ \\
    16  & $6.352\pm0.039$ & $248.2\pm 6.3$ & $\bm{6.295\pm0.022}$ & $\bm{233.9\pm10.6}$ \\
    32  & $6.375\pm0.030$ & $203.5\pm12.5$ & $\bm{6.340\pm0.034}$ & $\bm{195.8\pm11.8}$ \\
    64  & $6.377\pm0.026$ & $183.6\pm 8.0$ & $\bm{6.370\pm0.035}$ & $\bm{181.0\pm 7.3}$ \\
    \bottomrule
  \end{tabular}%
  }
  \caption{\textbf{UDLM text generation under NFE budgets.} Mean$\pm$std over 5 seeds (1024 sentences per run).
  Gen.\ PPL is scored by \texttt{GPT2-large} (lower is better).}
  \label{tab:udlm_genppl}
\end{table}

\begin{table}[h]
  \centering
  \small
  \resizebox{\columnwidth}{!}{%
  \begin{tabular}{c ccc ccc}
    \toprule
    \multirow{2}{*}{NFE} &
    \multicolumn{3}{c}{Standard} &
    \multicolumn{3}{c}{SHS} \\
    \cmidrule(lr){2-4} \cmidrule(lr){5-7}
    & Gen.\ PPL$\downarrow$ & NLL$\downarrow$ & Accuracy$\uparrow$
    & Gen.\ PPL$\downarrow$ & NLL$\downarrow$ & Accuracy$\uparrow$ \\
    \midrule
     16 & $189.95\pm8.02$  & $5.246\pm0.043$ & $0.223\pm0.004$
        & $\bm{157.02\pm11.05}$ & $\bm{5.054\pm0.069}$ & $\bm{0.231\pm0.005}$ \\
     32 & $106.76\pm4.79$  & $4.670\pm0.045$ & $0.266\pm0.004$
        & $\bm{89.49\pm2.74}$   & $\bm{4.494\pm0.031}$ & $\bm{0.272\pm0.003}$ \\
     64 & $73.15\pm5.13$   & $4.290\pm0.071$ & $0.292\pm0.005$
        & $\bm{63.57\pm2.54}$   & $\bm{4.151\pm0.040}$ & $\bm{0.299\pm0.004}$ \\
    128 & $60.10\pm2.88$   & $4.095\pm0.048$ & $0.310\pm0.005$
        & $\bm{52.23\pm2.03}$   & $\bm{3.955\pm0.039}$ & $\bm{0.313\pm0.005}$ \\
    256 & $59.26\pm2.95$   & $4.081\pm0.050$ & $0.310\pm0.005$
        & $\bm{49.92\pm1.53}$   & $\bm{3.910\pm0.031}$ & $\bm{0.317\pm0.004}$ \\
    \bottomrule
  \end{tabular}%
  }
  \caption{\textbf{GIDD text generation under NFE budgets.}
  Mean$\pm$std over 1024 sentences.
  Gen.\ PPL is scored by \texttt{GPT2-large} (lower is better).}
  \label{tab:gidd_nfe}
\end{table}

\begin{table}[h]
  \centering
  \small
  \resizebox{\columnwidth}{!}{%
  \begin{tabular}{c cc cc cc}
    \toprule
    \multirow{2}{*}{NFE} &
    \multicolumn{2}{c}{Standard} &
    \multicolumn{2}{c}{Adaptive~\citep{vonrutte2025scalingbehaviordiscretediffusion}} &
    \multicolumn{2}{c}{SHS} \\
    \cmidrule(lr){2-3} \cmidrule(lr){4-5} \cmidrule(lr){6-7}
    & Entropy & Gen.\ PPL$\downarrow$
    & Entropy & Gen.\ PPL$\downarrow$
    & Entropy & Gen.\ PPL$\downarrow$ \\
    \midrule
     8  & $5.823\pm0.673$ & $338.1\pm58.5$
        & $7.164\pm0.432$ & $1292.0\pm 1.6$
        & $\bm{5.756\pm0.613}$ & $\bm{316.1\pm 1.1}$ \\
    32  & $4.751\pm0.686$ & $115.7\pm 0.5$
        & $6.477\pm0.813$ & $649.8\pm 1.2$
        & $\bm{4.680\pm0.695}$ & $\bm{107.7\pm 0.2}$ \\
    128 & $4.379\pm0.696$ & $79.8\pm 0.5$
        & $6.160\pm1.249$ & $473.3\pm 1.6$
        & $\bm{4.321\pm0.797}$ & $\bm{75.2\pm 2.3}$ \\
    \bottomrule
  \end{tabular}%
  }
  \caption{\textbf{3B uniform diffusion model~\citep{vonrutte2025scalingbehaviordiscretediffusion}: text generation under NFE budgets.}
  1000 sentences, single seed.
  Gen.\ PPL is scored by \texttt{Qwen2.5-7B} (lower is better).
  Adaptive uses the confidence-based decoding
  of~\citet{vonrutte2025scalingbehaviordiscretediffusion},
  updating $\lceil N_{\mathrm{seq}}/\mathrm{NFE}\rceil$ positions per
  step so that the total position-update budget is comparable to the
  parallel samplers.}
  \label{tab:scaling3b_nfe}
\end{table}

\begin{figure}[h]
  \centering
  \begin{subfigure}[b]{0.32\columnwidth}
    \centering
    \includegraphics[width=\linewidth]{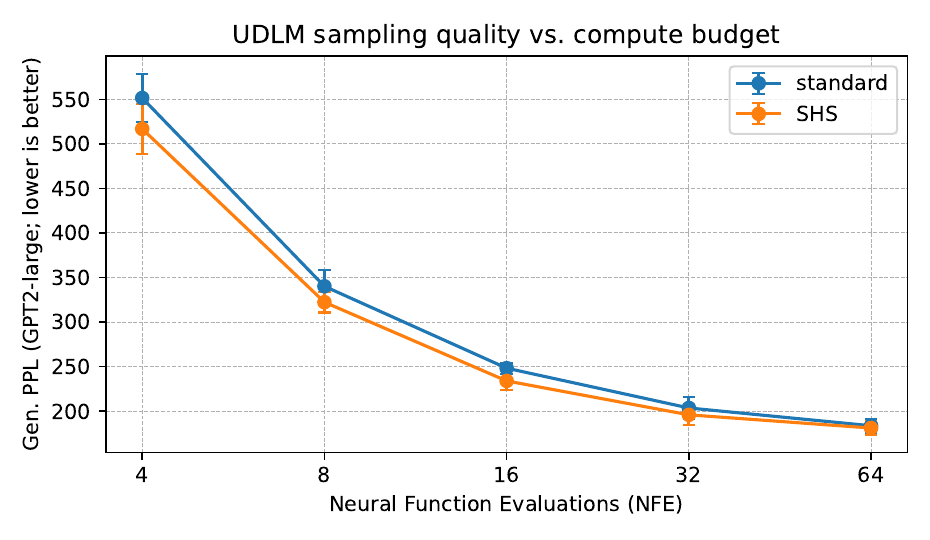}
    \caption{UDLM}
    \label{fig:udlm_genppl}
  \end{subfigure}
  \hfill
  \begin{subfigure}[b]{0.32\columnwidth}
    \centering
    \includegraphics[width=\linewidth]{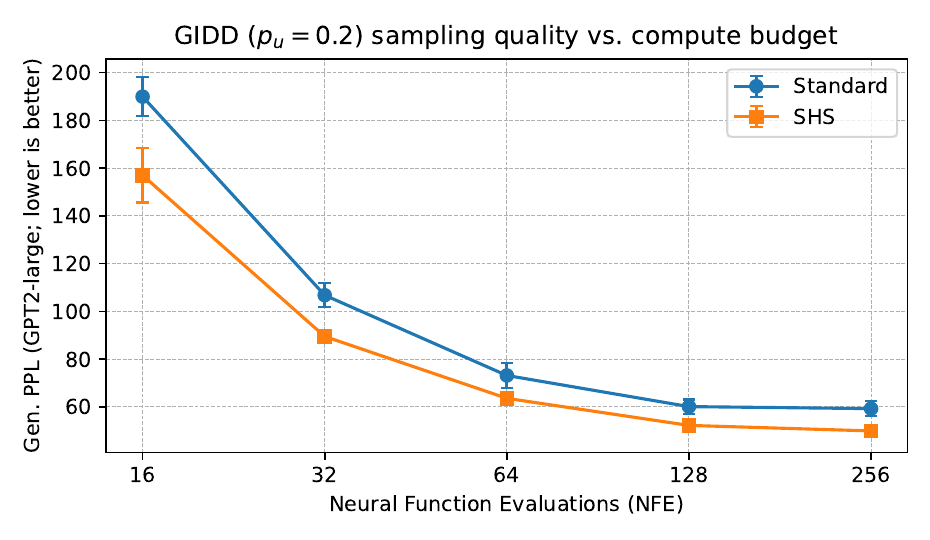}
    \caption{GIDD}
    \label{fig:gidd_genppl_nfe}
  \end{subfigure}
  \hfill
  \begin{subfigure}[b]{0.32\columnwidth}
    \centering
    \includegraphics[width=\linewidth]{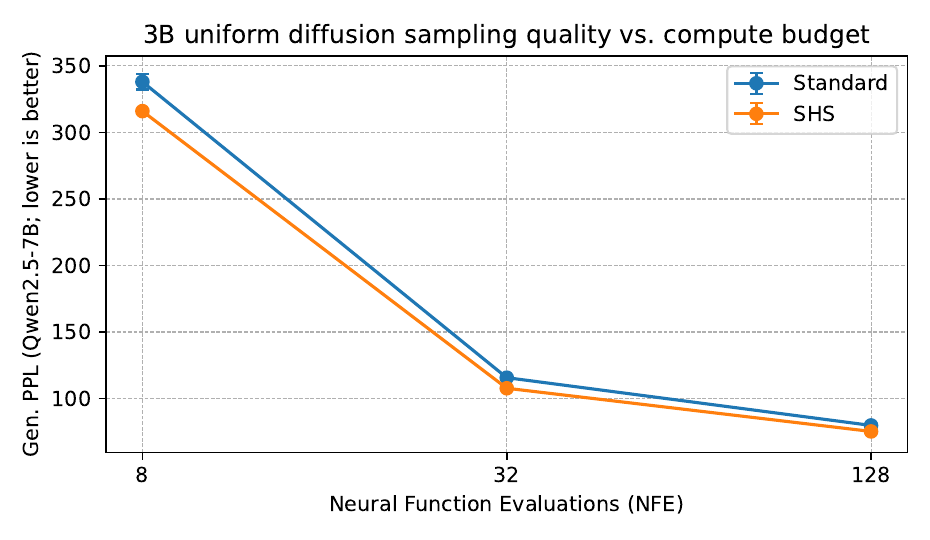}
    \caption{3B uniform diffusion}
    \label{fig:scaling3b_genppl}
  \end{subfigure}
  \vspace{-5px}
  \caption{\textbf{Gen.\ PPL vs.\ NFE.} SHS consistently improves generation quality across models and scales, with the largest gains in the few-step regime.}
  \label{fig:text_genppl}
\end{figure}

\subsection{Robustness under Lexical Constraints}
\label{sec:exp-blacklist}

We next evaluate robustness under \emph{random vocabulary truncation}, which provides a controlled lexical stress test.
We follow the same UDLM evaluation and GPT2-large scoring pipeline as Section~\ref{sec:exp-text},
but fix the discretization budget to $\mathrm{NFE}=64$ and vary the blacklist severity.

\paragraph{Protocol.}
For each blacklist ratio $\rho\in\{0.0,0.1,0.3,0.5,0.7\}$, we sample a forbidden set
$\mathcal{B}_\rho\subset\mathcal{V}$ uniformly at random with $|\mathcal{B}_\rho|=\lfloor \rho|\mathcal{V}|\rfloor$,
and define the allowed vocabulary $\mathcal{V}_\rho=\mathcal{V}\setminus\mathcal{B}_\rho$.
We initialize from the restricted uniform noise $X_0\sim\mathrm{Unif}(\mathcal{V}_\rho)^N$ (Eq.~\eqref{eq:safe_init}).
At each edit, we enforce the blacklist by renormalizing the destination distribution to $\mathcal{V}_\rho$
(Eq.~\eqref{eq:blacklist_renorm}), while keeping the model-predicted change masses unchanged.
We compare the Standard step-based sampler and SHS under the \emph{same} sampled blacklist $\mathcal{B}_\rho$ per run.

We generate 1024 sentences per run and repeat over 5 random seeds.
As in Section~\ref{sec:exp-text}, we report (i) the sample entropy logged by the UDLM evaluation and
(ii) Gen.\ PPL computed by a fixed GPT2-large evaluator (lower is better).

\paragraph{Results.}
Random vocabulary truncation degrades generation quality as $\rho$ increases, as expected due to a
reduced feasible token set. However, SHS consistently yields lower Gen.\ PPL than Standard across all
blacklist ratios, and the degradation with $\rho$ is more gradual under SHS
(Table~\ref{tab:udlm_blacklist} and Figure~\ref{fig:udlm_blacklist_curve}).
These robustness trends are consistent with SHS's optimal suppression of the "no-edit" tail under a
fixed cumulative mass (Appendix~\ref{app:blacklist}, Proposition~\ref{prop:zero-edit}).

\begin{table}[h]
  \centering
  \small
  \resizebox{0.8\columnwidth}{!}{%
  \begin{tabular}{c cc cc}
    \toprule
    \multirow{2}{*}{$\rho$} &
    \multicolumn{2}{c}{Standard} &
    \multicolumn{2}{c}{SHS} \\
    \cmidrule(lr){2-3} \cmidrule(lr){4-5}
    & Entropy & Gen.\ PPL$\downarrow$ & Entropy & Gen.\ PPL$\downarrow$ \\
    \midrule
    0.0 & $6.377\pm0.026$ & $183.6\pm 8.0$ & $\bm{6.370\pm0.035}$ & $\bm{181.0\pm 7.3}$ \\
    0.1 & $6.715\pm0.008$  & $193.7\pm3.1$  & $\bm{6.667\pm0.009}$  & $\bm{181.8\pm3.7}$ \\
    0.3 & $6.481\pm0.007$  & $231.4\pm4.3$  & $\bm{6.438\pm0.018}$  & $\bm{217.1\pm1.7}$ \\
    0.5 & $6.017\pm0.018$  & $207.8\pm2.8$  & $\bm{5.974\pm0.017}$  & $\bm{197.5\pm1.3}$ \\
    0.7 & $4.634\pm0.030$  & $261.8\pm1.6$  & $\bm{4.590\pm0.026}$  & $\bm{244.2\pm4.4}$ \\
    \bottomrule
  \end{tabular}}
  \caption{UDLM generation under random vocabulary truncation (NFE=64).
  Mean$\pm$std over 5 seeds (1024 sentences per run). Gen.\ PPL is scored by GPT2-large (lower is better).}
  \label{tab:udlm_blacklist}
\end{table}

\begin{figure}[h]
  \centering
  \includegraphics[width=0.66\columnwidth]{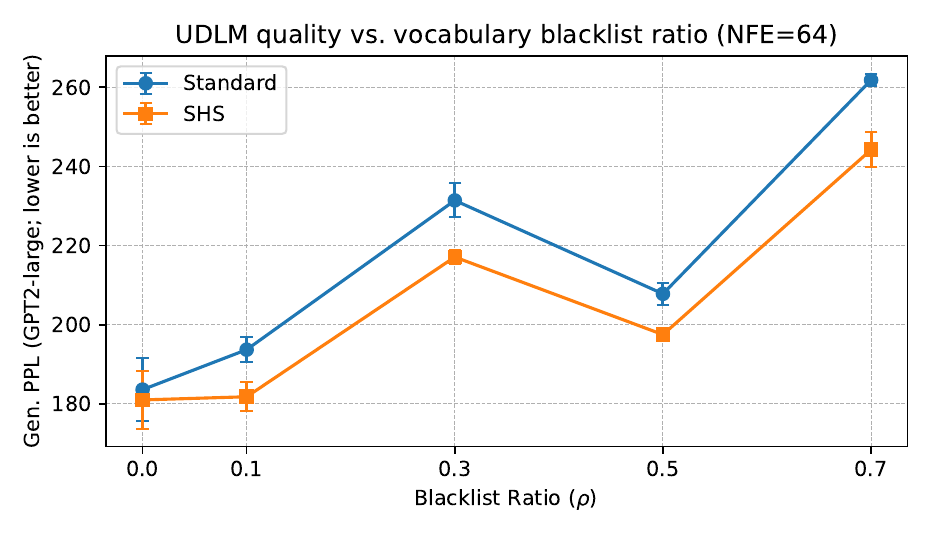}
  \caption{Robustness under random vocabulary truncation on UDLM (Gen.\ PPL vs.\ blacklist ratio $\rho$; lower is better).
  SHS degrades more gradually as $\rho$ increases.}
  \label{fig:udlm_blacklist_curve}
\end{figure}

\section{Conclusion}
\label{conclusion}

We proposed Stratified Hazard Sampling (SHS), a drop-in, hyperparameter-free variance-reduction rule for step-based inference in CTMC/DTMC discrete diffusion and flow models with a (stay vs. replace) structure. By stratifying cumulative hazard/jump mass with a single random phase per position, SHS preserves the expected number of edits while achieving minimal possible jump-count variance ($\leq 1/4$) at virtually no computational cost, without changing destination sampling.

Experiments on UDLM, GIDD, and a 3B-parameter uniform diffusion model~\citep{vonrutte2025scalingbehaviordiscretediffusion} show consistent sample-quality gains, especially in the few-step regime, and improved robustness under increasingly severe vocabulary truncation. MNIST diagnostics further confirm the intended regularization of event timing (Appendix~\ref{appen:MNIST}). Future work will extend experiments to larger code benchmarks and discrete flow matching models.

\begin{ack}
\end{ack}

\bibliography{ref}
\bibliographystyle{plainnat}

\newpage
\appendix

\section{MNIST test}\label{appen:MNIST}

\subsection{Setup}
\label{sec:exp-setup}
Our goal is to isolate the effect of event-time sampling on a continuous-time discrete generative model.
We fix the pretrained predictor and vary only the sampling procedure: (i) the standard sampler, and (ii) our Stratified Hazard Sampling (SHS).

\paragraph{Few-step regime.}
To stress-test trajectory instability, we vary the number of discretization steps
$\text{NFE} \in \{256, 64, 16, 8\}$ while keeping the same model and schedule.

\vspace{-45px}
\begin{figure}[H]
  \centering
  \begin{subfigure}{\linewidth}
    \centering
    \includegraphics[width=\linewidth]{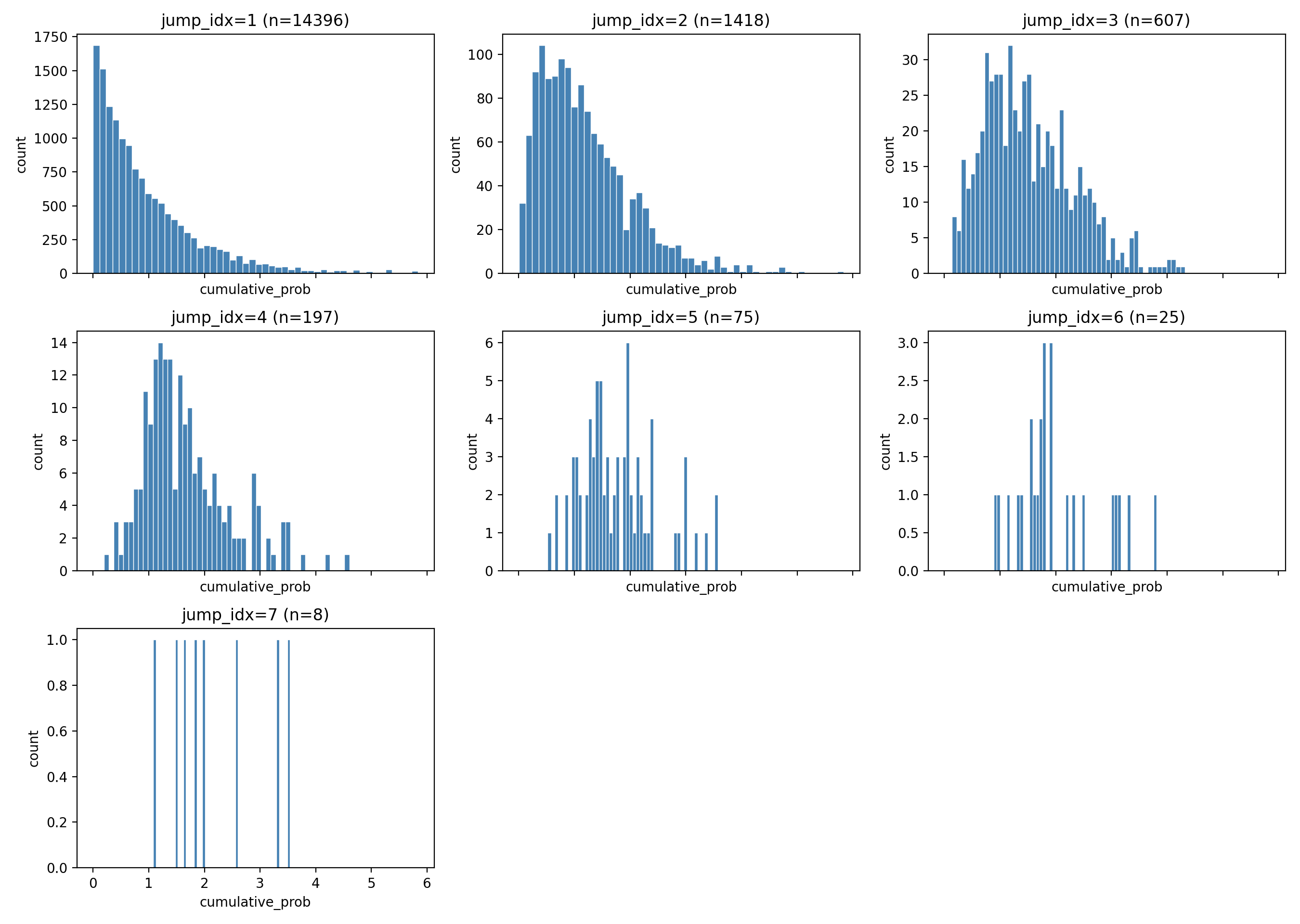}
    \caption{Standard sampling}
    \label{fig:hist-standard}
  \end{subfigure}
  \vspace{4pt}
  \begin{subfigure}{\linewidth}
    \centering
    \includegraphics[width=\linewidth]{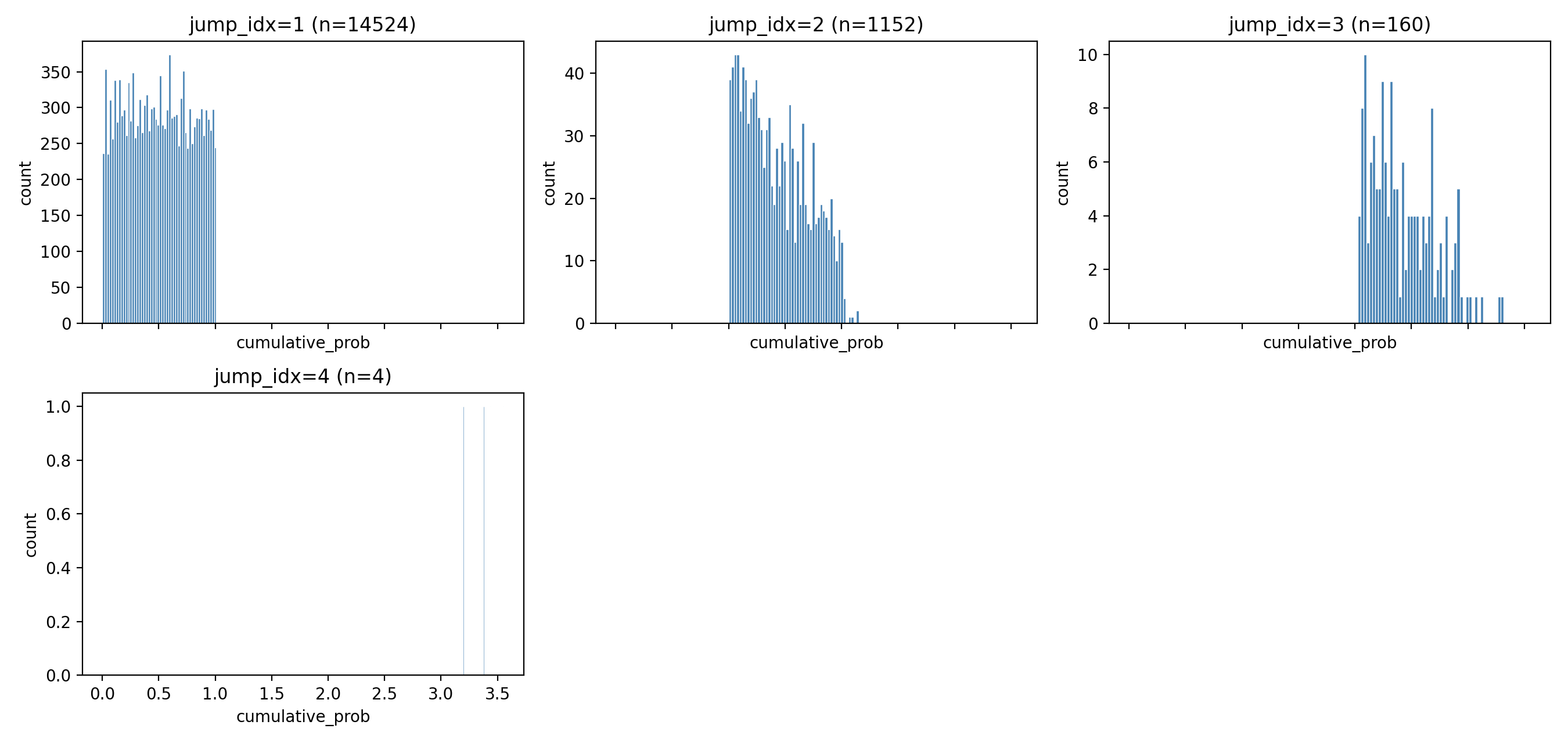}
    \caption{SHS}
    \label{fig:hist-shs}
  \end{subfigure}
  \caption{\textbf{Hazard-space jump locations.} Histograms of $S_k$ (the cumulative hazard at the $k$-th jump; shown as \texttt{cumulative\_prob} on the x-axis in the plots).
  Standard sampling exhibits Erlang/Gamma-shaped variability, while SHS produces stratified, bounded-support locations,
  demonstrating reduced trajectory randomness.}
  \label{fig:exp-jump-locations}
\end{figure}

\subsection{Jump Timing Diagnostics in Hazard Space}
\label{sec:exp-jump-loc}
We first validate whether SHS produces the intended \emph{regularization of event times} without altering the
per-jump categorical choice.

Let $\lambda(t)$ denote a (token-wise) jump rate along a sampled trajectory, and define the cumulative hazard
\begin{equation}
S(t) = \int_0^t \lambda(s)\,ds.
\end{equation}
We record the hazard-space jump locations $\{S_k\}_{k\ge 1}$, where $S_k$ is the cumulative hazard value at the
$k$-th jump.

\paragraph{Standard baseline: Erlang/Gamma-shaped locations.}
Under the standard NHPP view, inter-event increments in hazard space satisfy $\Delta S_k \sim \mathrm{Exp}(1)$ i.i.d.,
so the $k$-th jump location becomes
\begin{equation}
S_k = \sum_{i=1}^k \Delta S_i \sim \mathrm{Gamma}(k,1),
\end{equation}
which yields an Erlang-shaped distribution as $k$ increases. Empirically, we observe broad, heavy-tailed variability
in early jump locations under standard sampling, consistent with the ``under-/over-edit'' instability in the noise-start setting.

\paragraph{SHS: stratified (bounded-support) jump locations.}
SHS triggers an event when the cumulative hazard crosses regularly spaced thresholds with a single random offset,
which constrains the $k$-th event to occur within a narrow hazard interval (stratification).
Consequently, the empirical histograms of $H_k$ concentrate with bounded support and avoid the exponential long-tail behavior.
This confirms that SHS makes the \emph{timing} of jumps substantially more regular while leaving the per-jump destination
sampling unchanged.

\subsection{Quantized MNIST Reconstructions: Eliminating Under-Edit}
\label{sec:exp-mnist-recon}
We next visualize the practical effect of trajectory stabilization on a simple discrete generation task.
Starting from a noise initialization, we run the sampler for $\text{NFE} \in \{256,64,16,8\}$ discretization steps and compare the
resulting reconstructions.

\paragraph{Observation.}
Standard sampling often exhibits an \emph{under-edit} failure mode in the few-step regime:
some positions experience too few jumps, leaving noticeable $x_0$-derived noise even late in sampling.
In contrast, SHS produces visibly cleaner samples at early stages and substantially reduces residual noise in low-NFE
runs. Qualitatively, SHS maintains legible digit structure even at $\text{NFE}=16$ and $\text{NFE}=8$, where Standard outputs remain dominated by
unresolved noise.

\newpage
\begin{figure}[H]
  \centering
  \begin{subfigure}{0.48\linewidth}
    \centering
    \includegraphics[width=\linewidth]{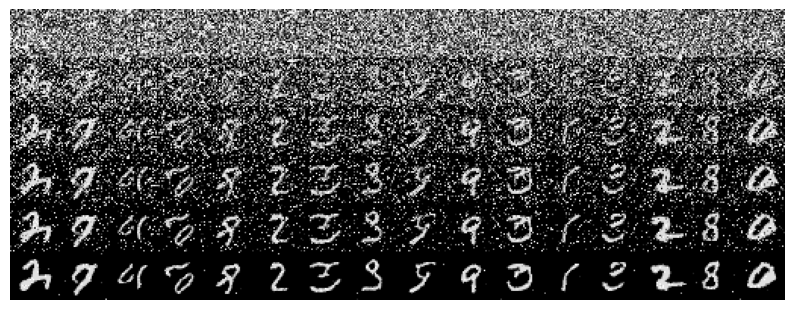} \\[-2pt]
    {\scriptsize $\text{NFE}=256$} \\[4pt]
    \includegraphics[width=\linewidth]{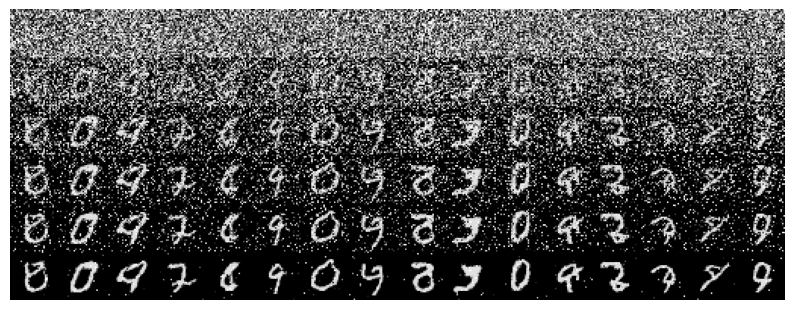} \\[-2pt]
    {\scriptsize $\text{NFE}=64$} \\[4pt]
    \includegraphics[width=\linewidth]{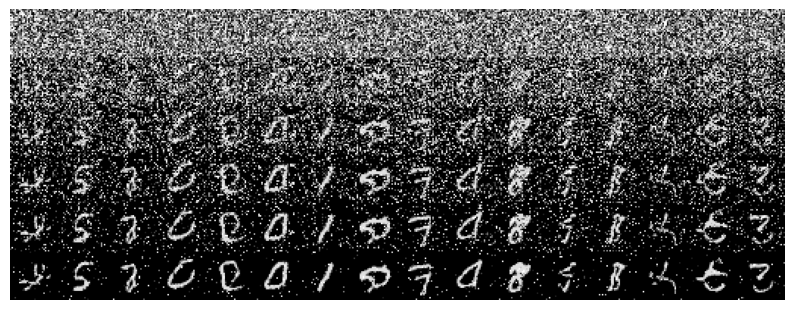} \\[-2pt]
    {\scriptsize $\text{NFE}=16$} \\[4pt]
    \includegraphics[width=\linewidth]{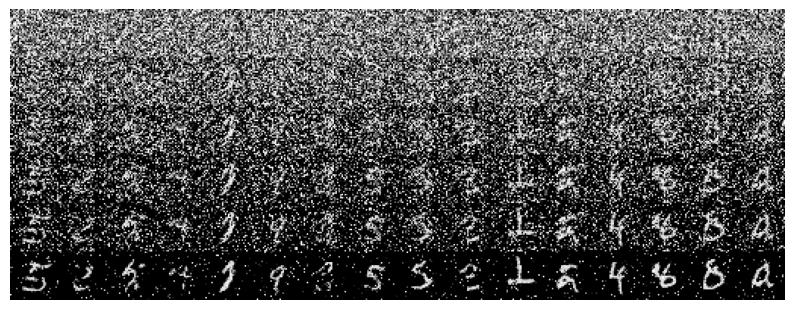} \\[-2pt]
    {\scriptsize $\text{NFE}=8$}
    \caption{Standard sampling}
    \label{fig:recon-standard}
  \end{subfigure}
  \hfill
  \begin{subfigure}{0.48\linewidth}
    \centering
    \includegraphics[width=\linewidth]{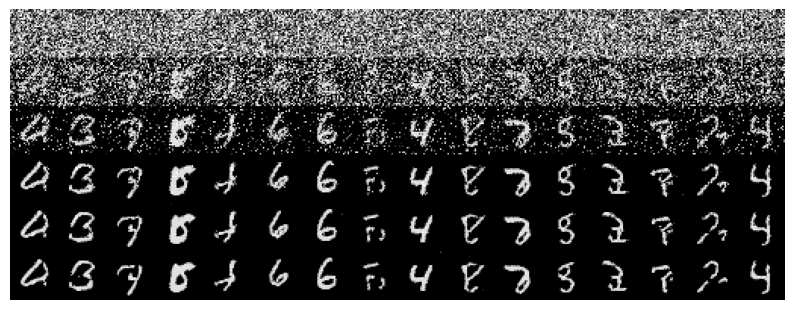} \\[-2pt]
    {\scriptsize $\text{NFE}=256$} \\[4pt]
    \includegraphics[width=\linewidth]{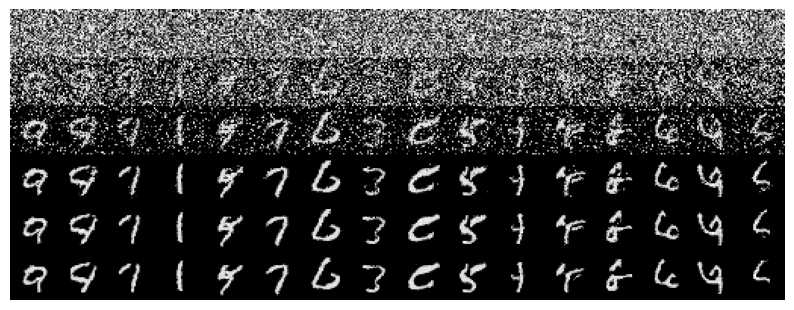} \\[-2pt]
    {\scriptsize $\text{NFE}=64$} \\[4pt]
    \includegraphics[width=\linewidth]{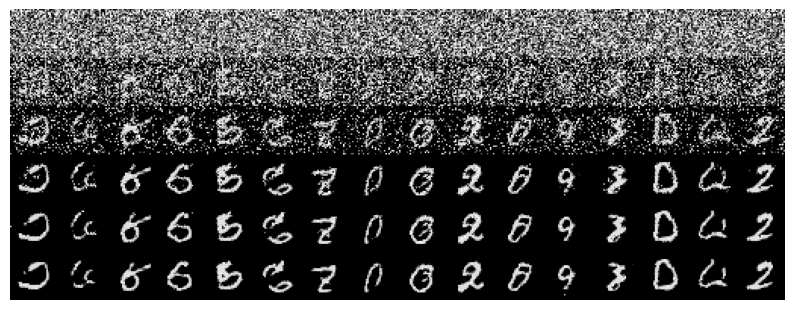} \\[-2pt]
    {\scriptsize $\text{NFE}=16$} \\[4pt]
    \includegraphics[width=\linewidth]{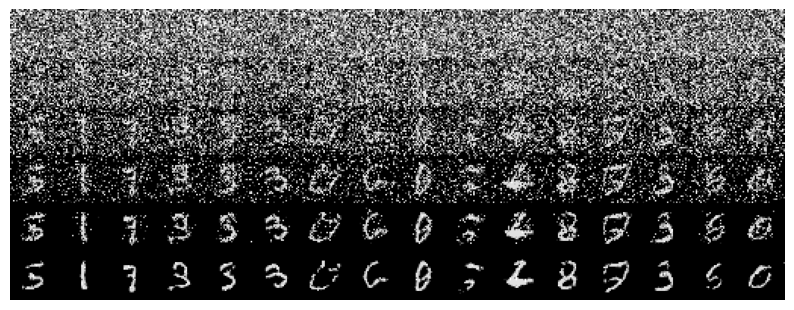} \\[-2pt]
    {\scriptsize $\text{NFE}=8$}
    \caption{SHS}
    \label{fig:recon-shs}
  \end{subfigure}
  \caption{\textbf{Quantized MNIST reconstructions under varying step budgets.}
  Standard sampling vs.\ SHS at $\text{NFE} \in \{256,64,16,8\}$ discretization steps.
  SHS reduces the under-edit regime and yields cleaner early-stage reconstructions.}
  \label{fig:exp-mnist-recon}
\end{figure}

\newpage

\section{Full Proof of Proposition~\ref{prop:unbiased_main} (Unbiasedness)}
\label{appen:unbiased_proof}
Fix $S\ge 0$ and draw $\theta\sim \mathrm{Uniform}(0,1)$.
Let $I=\lfloor S\rfloor$ and $f = S - I \in [0,1)$.
Define
\begin{equation}
J \;=\; I + \mathbf{1}[\theta<f].    
\end{equation}
Then
\begin{equation}
\mathbb{E}[J] \;=\; I + \mathbb{P}(\theta<f) \;=\; I + f \;=\; S,
\end{equation}
since $\theta$ is uniform on $[0,1]$.
This proves Proposition~\ref{prop:unbiased_main}.

\section{Full Proof of Proposition~\ref{prop:variance_main} (Minimal Variance)}
\label{appen:variance_proof}
Using the notation from Appendix~\ref{appen:unbiased_proof}, we have $J=I+B$ where $B=\mathbf{1}[\theta<f]\sim\mathrm{Bernoulli}(f)$.
Since $I$ is deterministic,
\begin{equation}
\mathrm{Var}(J)=\mathrm{Var}(I+B)=\mathrm{Var}(B)=f(1-f)\le \frac{1}{4},
\end{equation}
where the last inequality follows because the quadratic $f(1-f)$ attains its maximum at $f=1/2$.

Finally, any integer-valued random variable $\widetilde J$ supported on $\{I,I+1\}$ is determined by $p=\mathbb{P}(\widetilde J=I+1)$.
Unbiasedness $\mathbb{E}[\widetilde J]=S=I+f$ forces $p=f$.
Therefore $\mathrm{Var}(\widetilde J)=p(1-p)=f(1-f)$ for \emph{any} such unbiased estimator, and SHS attains this minimum.
This proves Proposition~\ref{prop:variance_main}.

\section{Additional Analysis of Stratified Hazard Sampling}
\label{app:theory}

This appendix collects additional properties of Stratified Hazard Sampling (SHS) and clarifies its scope for CTMC/DTMC-based samplers.

\subsection{SHS as a systematic coupling of Bernoulli trials}
\label{sec:systematic_coupling}

Consider a fixed sequence of per-step change masses $\{p_k\}_{k=0}^{n-1}\subset[0,1]$ and define the cumulative sums
$S_k=\sum_{j=0}^{k-1}p_j$ (with $S_0=0$) and $S_n=\sum_{k=0}^{n-1}p_k$.
The standard step-based sampler draws independent indicators $B_k^{\mathrm{iid}}\sim \mathrm{Bernoulli}(p_k)$ and the total number of changes is
$J=\sum_k B_k^{\mathrm{iid}}$ (Poisson-binomial).

SHS instead draws a \emph{single} phase $\theta\sim\mathrm{Uniform}(0,1)$.
Write $S_k = I_k + f_k$ with $I_k := \lfloor S_k \rfloor$ and $f_k := S_k - I_k \in [0,1)$,
and note that $S_{k+1}=S_k+p_k$ (so $S_{k+1}=I_{k+1}+f_{k+1}$ with $I_{k+1}:=\lfloor S_{k+1}\rfloor$ and $f_{k+1}:=S_{k+1}-I_{k+1}$).
We define a coupled indicator sequence by
\begin{equation}
B_k^{\mathrm{shs}}
\;=\;
\mathbf{1}\!\left[\;
I_k + \mathbf{1}[\theta<f_k]
\;<\;
I_{k+1} + \mathbf{1}[\theta<f_{k+1}]
\;\right],
\label{eq:shs_indicator}
\end{equation}
which is equivalent to triggering a change whenever the cumulative sum crosses the next unit boundary $\theta+m$.
Summing \eqref{eq:shs_indicator} over $k$ yields the randomized-rounding form
$J^{\mathrm{shs}}=\lfloor S_n\rfloor+\mathbf{1}[\theta<f_n]$ (cf.~\eqref{eq:shs_rounding}).

\subsection{Event-time stratification in cumulative hazard space}
\label{sec:hazard_strat}

For a CTMC with an \emph{exogenous} (state-independent) rate $\lambda(t)\ge 0$, define the continuous cumulative hazard
$S(t)=\int_0^t \lambda(\tau)\,d\tau$.
SHS places event boundaries at $\theta,\,\theta+1,\,\theta+2,\dots$ in hazard space, so the $m$-th event time is
\begin{equation}
T_m \;=\; \inf\{t\in[0,1]: S(t)\ge \theta+(m-1)\}.
\end{equation}
Equivalently, the hazard value at event $m$ is exactly $\theta+(m-1)$.
Thus, SHS guarantees \emph{exactly one} event in each unit-length hazard interval, eliminating the exponential-tail inter-arrival variability of Poisson processes
in hazard space (where $\Delta S\sim \mathrm{Exp}(1)$).

On a discrete grid (DTMC kernels or CTMC $\tau$-leaping), $S_k$ increases in steps.
If each increment satisfies $p_k\le p_{\max}\le 1$, then the boundary-crossing event triggered at step $k$ overshoots its hazard boundary by at most $p_{\max}$.
Therefore, SHS still yields tightly controlled event locations in hazard/jump-mass space even under discretization.

\subsection{Scope: inference-time variance reduction vs.\ exact simulation}
\label{sec:scope}

The results in Propositions~\ref{prop:unbiased_main}--\ref{prop:variance_main} are unconditional statements about randomized rounding for a \emph{fixed} mass $S$ (or a fixed sequence $\{p_k\}$).
In a full generative model, the per-step masses $p_{ik}(x)$ (or rates $\lambda_i(t,x)$) are typically \emph{state-dependent}.
Because SHS couples change decisions over time through the phase $\theta_i$, it introduces additional memory beyond the visible state $x$.
Accordingly, SHS should be viewed as an \emph{inference-time variance reduction integrator} for CTMC/DTMC samplers rather than as an exact simulator of the underlying Markov process.

Despite this, SHS leaves the per-change destination kernels $q_{k,i}$ untouched and dramatically reduces sampler-induced variability in the number and timing of edits.
This reduction is especially valuable under uniform-noise initialization, where meaningful generation requires multiple self-correction edits per position and where Poisson-binomial fluctuations can dominate the observed instability.

\paragraph{Composability with destination-side modifications.}
SHS changes only the \emph{event scheduling} (stay vs.\ replace decisions) by coupling per-step Bernoulli trials through a single phase,
and leaves the per-jump destination sampling unchanged (Algorithm~\ref{alg:shs}).
Therefore, SHS can be composed with destination-side modifications such as vocabulary truncation / blacklist filtering
by simply replacing $q_{t,i}$ (or $q_{k,i}$) with the filtered destination $q^{(\rho)}$ in \eqref{eq:blacklist_renorm},
while keeping the same change masses $p_{ik}$ (or escape rates $\lambda_i$).

\subsection{Absorbing-state (mask-start) implies single-jump trajectories}
\label{app:maskstart_single_edit}
This appendix formalizes why the jump-count variance discussed in Section~2.1 is a
\emph{multi-edit} phenomenon and is absent under the standard absorbing-state (mask-start) setting.

\paragraph{Setup.}
Let $V$ be the vocabulary and let $m=\texttt{[MASK]}$ be a dedicated sentinel token.
Define the extended alphabet $\bar V := V \cup \{m\}$, and consider step-based sampling on a grid
$\{t_k\}_{k=0}^{n}$.
A state is $x \in \bar V^N$.
At each step $k$ and position $i$, assume the sampler admits a (stay vs.\ replace) decomposition
as in Section~2.1, with a change mass $p_{ik}(x)\in[0,1]$ and a destination distribution
$q_{k,i}(\cdot\mid x)\in\Delta(V)$.

\paragraph{Unmask-only (absorbing) assumption.}
In the absorbing-state (mask-start) sampler, we assume \emph{unmask-only} dynamics:
once a position leaves $m$, it never changes again.
Concretely, for any $x\in\bar V^N$,
\begin{equation}
p_{ik}(x)=0 \quad \text{whenever } x_i\in V.
\label{eq:maskstart_absorb_condition}
\end{equation}
When $x_i=m$, the update at step $k$ draws $B_{ik}\sim\mathrm{Bernoulli}(p_{ik}(x))$ and, if $B_{ik}=1$,
sets $x'_{i}\sim q_{k,i}(\cdot\mid x)$ (thus $x'_i\in V$); otherwise it keeps $x'_i=m$.

Define the (non-trivial) edit indicator and total edit count
\begin{equation}
B_{ik} := 1[x'_{i}\neq x_i], \qquad
J_i := \sum_{k=0}^{n-1} B_{ik}.
\label{eq:maskstart_J_def}
\end{equation}

\begin{proposition}[Single-edit property under absorbing-state dynamics]
\label{prop:maskstart_single_edit}
Under \eqref{eq:maskstart_absorb_condition}, each position can undergo at most one edit:
for every $i$, $J_i\in\{0,1\}$ almost surely.
\end{proposition}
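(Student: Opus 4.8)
The plan is to follow the trajectory of a single position $i$ across the grid $\{t_k\}$ and show that the token value $x_i$ can change at most once, because once it holds any non-mask symbol it is frozen for the remainder of sampling. Throughout I write $x_i^{(k)}$ for the value of position $i$ at the start of step $k$, so that the edit indicator of \eqref{eq:maskstart_J_def} is $B_{ik}=\mathbf{1}[x_i^{(k+1)}\neq x_i^{(k)}]$ and $J_i=\sum_{k} B_{ik}$.

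First I would record that every jump lands in $V$. Whenever $B_{ik}=1$, the new token is drawn from $q_{k,i}(\cdot\mid x)\in\Delta(V)$, so $x_i^{(k+1)}\in V$ almost surely; in particular the dynamics can never re-enter the mask state $m$ via a change. Next, the heart of the proof is the \emph{absorption invariant}: if $x_i^{(k)}\in V$, then $B_{ik}=0$ almost surely, so $x_i^{(k+1)}=x_i^{(k)}$. This is immediate from the unmask-only hypothesis \eqref{eq:maskstart_absorb_condition}, since $x_i^{(k)}\in V$ forces $p_{ik}(x^{(k)})=0$ and hence $B_{ik}\sim\mathrm{Bernoulli}(0)$, which vanishes a.s.

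Combining these two facts, the value sequence $(x_i^{(k)})_k$ has a very rigid structure: the only reachable symbols are $m$ and elements of $V$; a transition out of $m$ necessarily goes into $V$; and $V$ is absorbing with no further edits. By an induction on $k$ anchored at the initial state, the trajectory therefore equals its starting value until a single (possibly never-occurring) first jump into $V$, after which it stays constant. Consequently the number of value-changes along the trajectory is at most one, which gives $J_i=\sum_k \mathbf{1}[x_i^{(k+1)}\neq x_i^{(k)}]\in\{0,1\}$ almost surely, proving Proposition~\ref{prop:maskstart_single_edit}.

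As for the main obstacle: there is essentially none at the level of difficulty, since the statement is a direct consequence of the absorbing condition. The only step that genuinely needs the hypotheses (rather than being a formality) is verifying that a jump from $m$ cannot return to $m$: this is what $q_{k,i}(\cdot\mid x)\in\Delta(V)$ guarantees, and without it a position could oscillate between $m$ and $V$ and the single-edit bound would fail. I would therefore make sure to invoke $\Delta(V)$ explicitly at the point where the post-jump value is claimed to lie in $V$.
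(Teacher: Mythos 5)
Your proof is correct and follows essentially the same route as the paper's: the first jump from $m$ necessarily lands in $V$, and the unmask-only condition \eqref{eq:maskstart_absorb_condition} then freezes the position, so at most one edit can occur; the paper organizes this around the first jump time $k^\star$ rather than as an explicit absorption invariant plus induction, but the content is identical. One quibble with your closing remark: the single-edit bound does not actually hinge on $q_{k,i}(\cdot\mid x)\in\Delta(V)$, because on the alphabet $\bar V = V\cup\{m\}$ any \emph{actual} change from $m$ (a draw equal to $m$ is not an edit under \eqref{eq:maskstart_J_def}) must land in $\bar V\setminus\{m\}=V$, after which \eqref{eq:maskstart_absorb_condition} forbids the return $V\to m$ needed for oscillation.
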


\begin{proof}
If $J_i=0$ there is nothing to prove.
Otherwise, let $k^\star$ be the first step where $B_{ik^\star}=1$.
By construction, $x_{i}$ changes only when $x_i=m$, hence $x^{(k^\star)}_i=m$ and
after the update we have $x^{(k^\star+1)}_i \in V$.
Then \eqref{eq:maskstart_absorb_condition} implies that for all later steps $\ell>k^\star$,
$p_{i\ell}(x^{(\ell)})=0$ and thus $B_{i\ell}=0$.
Therefore exactly one edit can occur, so $J_i=1$.
\end{proof}

\paragraph{Variance implication.}
Since $J_i\in\{0,1\}$, it is a Bernoulli random variable and
\begin{equation}
\mathrm{Var}(J_i)\le \tfrac14.
\end{equation}
Moreover, many mask-start implementations explicitly force any remaining \texttt{[MASK]} tokens to be
resolved by the terminal time (e.g., by setting $p_{i,n-1}(x)\equiv 1$ for $x_i=m$ at the final step),
in which case $J_i=1$ deterministically and $\mathrm{Var}(J_i)=0$.

\paragraph{Interpretation (why SHS is mainly needed for uniform-noise start).}
Under mask-start, the state explicitly reveals whether a position has been generated via the sentinel
$m=\texttt{[MASK]}$, and the unmask-only constraint \eqref{eq:maskstart_absorb_condition} enforces a
single-edit trajectory per position.
In contrast, uniform-noise start typically requires multiple self-correction edits per position,
where sampler randomness in the number and timing of edits becomes substantial (Section~2.1),
motivating SHS as a minimal-variance event scheduler.

\section{Blacklists: conditioning lens and mass-preserving filtering}
\label{app:blacklist}

This appendix collects background and technical details for the random-blacklist experiments
in Section~\ref{sec:exp-blacklist}. We emphasize that our main paper studies SHS as an
\emph{inference-time variance reduction rule} (Appendix~\ref{app:theory}, Section~\ref{sec:scope}),
and we use blacklists primarily as a robustness stress test rather than as an exact conditional sampler.

\subsection{Conditioning viewpoint and Doob $h$-transform (background)}
\label{app:blacklist-doob}

Let $\mathcal{B}\subset\mathcal{V}$ be a blacklist and define the allowed event
$\mathcal{A}=\{x\in\mathcal{V}^N:\forall i,\ x_i\notin\mathcal{B}\}$.
If the goal is the \emph{conditional} terminal distribution,
\begin{equation}
p_1(x\mid \mathcal{A})
\;=\;
\frac{p_1(x)\,\mathbf{1}[x\in\mathcal{A}]}
{\mathbb{P}_{p_1}(X_1\in\mathcal{A})},
\label{eq:app_terminal_conditional}
\end{equation}
then conditioning generally changes the dynamics at intermediate times.

\paragraph{CTMC Doob transform.}
For a time-inhomogeneous CTMC with generator $Q_t$, define the survival function
\begin{equation}
h_t(x)\;=\;\mathbb{P}(X_1\in\mathcal{A}\mid X_t=x).
\label{eq:app_ht}
\end{equation}
A classical result (Doob's $h$-transform) gives the conditioned generator
\begin{equation}
Q_t^{(\mathcal{A})}(x,y)
\;=\;
Q_t(x,y)\,\frac{h_t(y)}{h_t(x)}
\qquad (y\neq x),
\label{eq:app_doob}
\end{equation}
with diagonal entries chosen so rows sum to zero.
Crucially, \eqref{eq:app_doob} reweights not only \emph{which} transitions are taken but also their
effective intensities, through the lookahead ratio $h_t(y)/h_t(x)$.

\paragraph{DTMC analogue.}
An analogous $h$-transform exists for discrete-time Markov chains and similarly reweights the transition kernel
by a lookahead ratio. The key point is the same: the correct conditional dynamics depends on future survival.

\paragraph{Why destination-only masking is not exact conditioning.}
A common blacklist heuristic modifies only the destination distribution via renormalization (cf.\ \eqref{eq:blacklist_renorm}):
\begin{equation}
q^{(\rho)}_{t,i}(v\mid x)\ \propto\ q_{t,i}(v\mid x)\,
\mathbf{1}[v\notin\mathcal{B}_\rho],
\label{eq:app_mask_dest}
\end{equation}
while leaving the base escape rates $\lambda_i(t,x)$ (or DTMC stay-vs.-replace masses $p_{ik}(x)$) unchanged.
This generally defines a different dynamics than the Doob-transformed one \eqref{eq:app_doob}, since the latter
incorporates the global lookahead $h_t$ in both destinations and intensities.
In this paper, we do \emph{not} attempt to estimate $h_t$; we use this viewpoint only as a diagnostic lens.

\subsection{Mass-preserving destination filtering used in our experiments}
\label{app:blacklist-masspreserve}

In Section~\ref{sec:exp-blacklist}, we use \emph{random vocabulary truncation} rather than explicit token-removal
conditioning. Fix $\rho$ and let $\mathcal{V}_\rho=\mathcal{V}\setminus\mathcal{B}_\rho$.
We start from $X_0\sim\mathrm{Unif}(\mathcal{V}_\rho)^N$ (Eq.~\eqref{eq:safe_init}) and enforce the blacklist by
\emph{renormalizing only the destination distribution} (Eq.~\eqref{eq:blacklist_renorm}), keeping the change mass unchanged.

\paragraph{DTMC kernel view.}
Let $P_{k,i}(\cdot\mid x)$ be the model's per-position categorical kernel, and let
$p_{ik}(x)=1-P_{k,i}(x_i\mid x)$ and $q_{k,i}(\cdot\mid x)$ be the stay-vs.-replace decomposition (Eq.~\eqref{eq:dtmc_decomp}).
Define the filtered destination
\begin{equation}
q^{(\rho)}_{k,i}(v\mid x)
\;=\;
\frac{q_{k,i}(v\mid x)\,\mathbf{1}[v\in\mathcal{V}_\rho]}
{\sum_{u\in\mathcal{V}_\rho} q_{k,i}(u\mid x)}.
\label{eq:app_qrho_dtmc}
\end{equation}
Then the filtered kernel is
\begin{equation}
P^{(\rho)}_{k,i}(x_i\mid x)=P_{k,i}(x_i\mid x),
\qquad
P^{(\rho)}_{k,i}(v\mid x)=p_{ik}(x)\,q^{(\rho)}_{k,i}(v\mid x)\quad(v\neq x_i),
\label{eq:app_Prho}
\end{equation}
so the total probability of changing the token remains exactly $p_{ik}(x)$.

\paragraph{CTMC generator view.}
Similarly, for a CTMC parameterization $Q_t(x^{(i\leftarrow v)}\mid x)=\lambda_i(t,x)q_{t,i}(v\mid x)$,
we keep $\lambda_i(t,x)$ unchanged and replace $q_{t,i}$ by $q^{(\rho)}_{t,i}$ in Eq.~\eqref{eq:blacklist_renorm},
redistributing rate mass only over the allowed vocabulary.

\paragraph{Relation to ``compensating'' for removed mass.}
An equivalent implementation of sampling $v\sim q^{(\rho)}_{t,i}(\cdot\mid x)$ is rejection sampling:
draw $v\sim q_{t,i}(\cdot\mid x)$ repeatedly until $v\in\mathcal{V}_\rho$.
This can be interpreted as ``compensating'' for the removed blacklist mass by renormalization; it preserves the
effective change probability once a change event is triggered.

\subsection{A simple SHS property relevant to blacklist robustness}
\label{app:blacklist-theory}

The blacklist experiments typically become more brittle as $\rho$ increases because generation must succeed
under a smaller feasible action space. In this regime, \emph{under-editing tails} (positions that receive too few edits)
are especially harmful. The following elementary proposition formalizes an optimal suppression of the zero-edit event
under a fixed realized cumulative mass.

\begin{proposition}[SHS minimizes the probability of zero edits under fixed cumulative mass]
\label{prop:zero-edit}
Fix $S\ge 0$ and let $J\in\mathbb{Z}_{\ge 0}$ be any random variable such that $\mathbb{E}[J]=S$.
Then
\begin{equation}
\mathbb{P}(J=0)\ \ge\ (1-S)_+ \;:=\; \max\{0,1-S\}.
\label{eq:zero_edit_lb}
\end{equation}
Moreover, SHS attains equality: if $\theta\sim\mathrm{Unif}(0,1)$ and
$J_{\mathrm{shs}}=\lfloor S\rfloor+\mathbf{1}[\theta< S - \lfloor S \rfloor]$, then
$\mathbb{P}(J_{\mathrm{shs}}=0)=(1-S)_+$.
\end{proposition}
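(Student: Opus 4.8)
The plan is to separate the two claims cleanly: first establish the universal lower bound $\mathbb{P}(J=0)\ge(1-S)_+$ for \emph{any} nonnegative integer-valued $J$ with mean $S$, and then verify that the explicit SHS rounding rule saturates it. This mirrors the structure of Proposition~\ref{prop:variance_main}, where an unconditional bound is stated and SHS is shown to be the minimizer.

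For the lower bound, the key observation is a Markov-type inequality tailored to integer support. Since $J\in\mathbb{Z}_{\ge 0}$, every realization satisfies $J\ge\mathbf{1}[J\ge 1]$ (the left side is $0$ exactly when $J=0$ and is at least $1$ otherwise). Taking expectations and using $\mathbb{E}[J]=S$ gives
\[
S \;=\; \mathbb{E}[J] \;\ge\; \mathbb{E}\bigl[\mathbf{1}[J\ge 1]\bigr] \;=\; \mathbb{P}(J\ge 1) \;=\; 1-\mathbb{P}(J=0),
\]
hence $\mathbb{P}(J=0)\ge 1-S$. Combining this with the trivial bound $\mathbb{P}(J=0)\ge 0$ yields $\mathbb{P}(J=0)\ge\max\{0,1-S\}=(1-S)_+$, which is the claimed inequality \eqref{eq:zero_edit_lb}.

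For the equality claim, I would substitute the explicit SHS variable $J_{\mathrm{shs}}=\lfloor S\rfloor+\mathbf{1}[\theta<f]$ with $f=S-\lfloor S\rfloor$ and split on the size of $S$. When $S\ge 1$ we have $\lfloor S\rfloor\ge 1$, so $J_{\mathrm{shs}}\ge 1$ deterministically and $\mathbb{P}(J_{\mathrm{shs}}=0)=0=(1-S)_+$. When $0\le S<1$ we have $\lfloor S\rfloor=0$ and $f=S$, so $J_{\mathrm{shs}}=\mathbf{1}[\theta<S]$ and therefore $\mathbb{P}(J_{\mathrm{shs}}=0)=\mathbb{P}(\theta\ge S)=1-S=(1-S)_+$. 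In both cases SHS attains the bound, which establishes optimality.

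There is no substantial obstacle here; the only point requiring care is the sharp integer-support inequality $J\ge\mathbf{1}[J\ge 1]$. Intuitively, to minimize $\mathbb{P}(J=0)$ at fixed mean $S$ one wants to move as little probability mass as possible away from $0$ while still meeting the mean constraint, which is most efficiently done by placing the nonzero mass at the smallest admissible positive integer, namely $1$; any mass placed on $J\ge 2$ is ``wasted'' in that it could have been split to cover more of the $J=0$ deficit. The SHS rule does exactly this when $S<1$, which is why it is the unique (up to the choice of mass on $\{\lfloor S\rfloor,\lceil S\rceil\}$) minimizer rather than merely a feasible point.
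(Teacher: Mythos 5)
Your proof is correct and follows essentially the same route as the paper's: the pointwise inequality $J\ge\mathbf{1}[J\ge 1]$ combined with the mean constraint gives the lower bound, and the case split $S\ge 1$ versus $S<1$ verifies that $J_{\mathrm{shs}}$ attains it. The closing intuition about optimal mass placement is a harmless addition, but the core argument matches the paper exactly.
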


\begin{proof}
Since $J\ge \mathbf{1}[J\ge 1]$, we have $\mathbb{E}[J]\ge \mathbb{P}(J\ge 1)$.
Thus $\mathbb{P}(J=0)=1-\mathbb{P}(J\ge 1)\ge 1-\mathbb{E}[J]=1-S$.
Truncating at $0$ yields $(1-S)_+$. For SHS, if $S\ge 1$ then $J_{\mathrm{shs}}\ge 1$ almost surely.
If $S<1$ then $J_{\mathrm{shs}}=\mathbf{1}[\theta<S]$ is Bernoulli$(S)$, hence $\mathbb{P}(J_{\mathrm{shs}}=0)=1-S$.
\end{proof}

\paragraph{How to read Proposition~\ref{prop:zero-edit} in this paper.}
Proposition~\ref{prop:zero-edit} is a statement about randomized rounding for a \emph{fixed} realized mass $S$.
In the full model, the per-step masses $p_{ik}(x)$ are state-dependent, so SHS should be viewed as an
inference-time variance reduction rule rather than an exact simulator (Appendix~\ref{app:theory}, Section~\ref{sec:scope}).
Nevertheless, the proposition provides a useful lens: among unbiased integer edit-count constructions with the same
expected total mass, SHS is optimal in suppressing the ``no-edit'' tail, which is consistent with the robustness trends
observed under increasing blacklist severity.

\end{document}